\documentclass[nohyperref]{article}

\usepackage{microtype}
\usepackage{graphicx}
\usepackage{subfigure}
\usepackage{booktabs} 
\usepackage{multirow}
\usepackage{siunitx}
\usepackage{rotating}
\usepackage{hyperref}



\usepackage[accepted]{icml2022}

\usepackage{amsmath}
\usepackage{amssymb}
\usepackage{mathtools}
\usepackage{amsthm}

\usepackage[capitalize,noabbrev]{cleveref}

\theoremstyle{plain}
\newtheorem{theorem}{Theorem}[section]

\theoremstyle{definition}
\newtheorem{definition}[theorem]{Definition}

\theoremstyle{remark}

\usepackage[textsize=tiny]{todonotes}

\icmltitlerunning{Towards Scaling Difference Target Propagation by Learning Backprop Targets}


\begin{document}

\twocolumn[
\icmltitle{Towards Scaling Difference Target Propagation by Learning Backprop Targets}

\icmlsetsymbol{equal}{*}
\begin{icmlauthorlist}
\icmlauthor{Maxence Ernoult}{ibm}
\icmlauthor{Fabrice Normandin}{equal,mila}
\icmlauthor{Abhinav Moudgil}{equal,mila,concordia}
\icmlauthor{Sean Spinney}{mila,udem}
\icmlauthor{Eugene Belilovsky}{mila,concordia}
\icmlauthor{Irina Rish}{mila,udem}
\icmlauthor{Blake Richards}{mila,mcgill,neuro}
\icmlauthor{Yoshua Bengio}{mila,udem}
\end{icmlauthorlist}

\icmlaffiliation{ibm}{IBM Research, Paris. Work done while at Mila.}
\icmlaffiliation{udem}{UdeM.}
\icmlaffiliation{mila}{Mila.}
\icmlaffiliation{concordia}{Concordia University.}
\icmlaffiliation{mcgill}{McGill University.}
\icmlaffiliation{neuro}{Montreal Neurological Institute}

\icmlcorrespondingauthor{Maxence Ernoult}{maxence.ernoult@ibm.com}
\icmlcorrespondingauthor{Yoshua Bengio}{yoshua.bengio@mila.quebec}

\icmlkeywords{Machine Learning, ICML}

\vskip 0.3in
]
\printAffiliationsAndNotice{\icmlEqualContribution}

\begin{abstract}
The development of biologically-plausible learning algorithms is important for understanding learning in the brain, but most of them fail to scale-up to real-world tasks, limiting their potential as explanations for learning by real brains. As such, it is important to explore learning algorithms that come with strong theoretical guarantees and can match the performance of backpropagation (BP) on complex tasks. One such algorithm is Difference Target Propagation (DTP), a biologically-plausible learning algorithm whose close relation with Gauss-Newton (GN) optimization has been recently established. However, the conditions under which this connection rigorously holds preclude layer-wise training of the feedback pathway synaptic weights (which is more biologically plausible). Moreover, good alignment between DTP weight updates and loss gradients is only loosely guaranteed and under very specific conditions for
the architecture being trained. In this paper, we propose a novel feedback weight training scheme that ensures both that DTP approximates BP and that layer-wise feedback weight training can be restored without sacrificing any theoretical guarantees. Our theory is corroborated by experimental results and we report the best performance ever achieved by DTP on CIFAR-10 and ImageNet 32$\times$32. 
\end{abstract}

\section{Introduction}
\label{sec:introduction}

\begin{figure}[ht!]
  \begin{center}
    \includegraphics[width=0.5\textwidth]{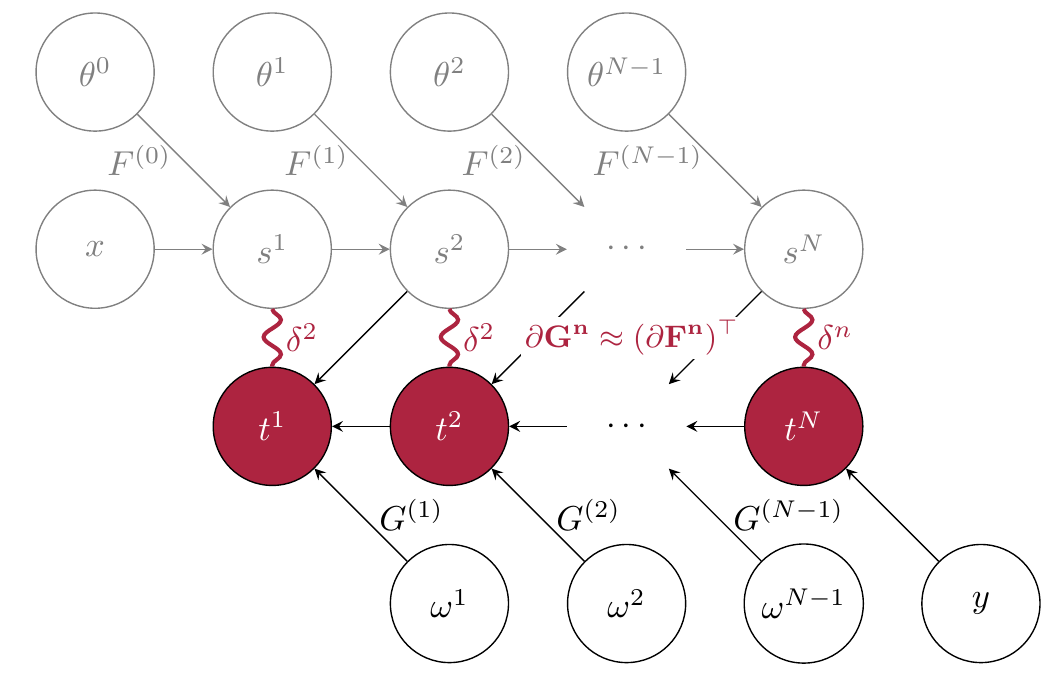}
  \end{center}
  \label{fig:dtp}
  \caption{Computational graph of the feedforward pathway $\mathcal{F}$ (on the left, shaded) with input $x$  and associated DTP feedback pathway with ground-truth label $y$ (right). The \emph{targets} $t^n$ (purple nodes) are forward-propagated through the $G^{n + 1}$ operator whose Jacobian has been made to approximately match that of the transpose of $F^n$. This way, the resulting local activation differences $\delta^n\propto t^n - s^n$ encode backprop error signals. We thus learn to estimate layer-wise \emph{backprop} targets.}
\end{figure}

Although artificial neural networks were originally inspired by the brain, the strict implementation of the backpropagation algorithm (BP) violates biological constraints, and no known biologically plausible candidate algorithm can match its performance on challenging tasks. Conversely, bridging this gap could bring a better understanding of biological learning \cite{richards2019deep}. Recent efforts towards this goal suggest that it could be achieved by developing learning algorithms that relax the requirements of BP while preserving strong theoretical guarantees. 

\emph{Target Propagation} (TP) \cite{yann1987modeles} and its \emph{Difference Target Propagation} (DTP) variants \cite{bengio2014auto, lee2015difference,  bartunov2018assessing, ororbia2019biologically, bengio2020deriving, meulemans2020theoretical} constitute a family of such algorithms which, from the biological prospective, sidesteps two issues of BP. Most importantly, TP computes error signals in feedforward architectures by propagating \emph{target values} for the neurons rather than error gradients, thereby aligning better with our understanding of what feedback pathways in the brain communicate \cite{lillicrap2020backpropagation}. A major consequence of handling neural activation targets across all layers is that feedforward weights can be updated in a fully local fashion to push neural activations closer to their target values. Second, TP routes those targets through a distinct set of feedback weights rather than transporting the weights from the feedforward pathway \cite{lillicrap2016random}. Rather than being fixed throughout learning, though, these feedback weights learn to \emph{invert} the feedforward pathway. But what would it take to learn to \emph{backprop} the feedforward pathway? This is the central question addressed by the present work.

Nevertheless, TP algorithms have yet to scale to complex tasks, and as such, they do not yet stand as a compelling biological learning model.
Recent work highlighting the connection between TP and Gauss-Newton (GN) optimization \cite{bengio2020deriving, meulemans2020theoretical} has incentivized
to revisit the scalability of TP algorithms \cite{bartunov2018assessing}. More precisely, \citet{meulemans2020theoretical} demonstrate that while TP neural activations updates emulates GN optimization in invertible neural networks, this connection can be maintained with DTP on non-invertible networks if the feedback weights training scheme is changed accordingly.
Indeed, to emulate GN optimization, as the pseudo-inverse of the whole feedforward pathway does not factorize as the product of each feedforward module's pseudo-inverse, each feedback module should capture the pseudo-inverse of the \emph{whole} downstream feedforward pathway: when computing the resulting Difference Reconstruction Loss (DRL), noisy perturbations subsequently need to be propagated all the way up to the output layer. However, their approach still has limitations from a biological learning perspective. First, enforcing GN optimization in DTP like this precludes \emph{layer-wise} feedback weights training and instead calls for the use of direct connections in the feedback pathway: this topological restriction seriously compromises biological plausibility. Second, the resulting optimization algorithm used to update the feedforward weights is a hybrid between between gradient descent and GN optimization. Therefore, only loose alignment between backprop and DTP updates can be accounted for by their theory and with restrictive assumptions on the architecture being trained. Finally, although that theory offers a principled way to design the architectures trained by these variants of DTP, the CIFAR-10 training experiments they report are limited to relatively shallow architectures with poor performance.

In this paper, we propose to revisit the GN interpretation of DTP by having the feedback pathway synaptic weights compute layer-wise \emph{BP targets} rather than GN targets. To this end, we propose a novel feedback weights training scheme which, by construction, pushes the Jacobian of the feedback operator towards the transpose of its feedforward counterpart, in a layer-wise fashion and without having to use direct feedback connections in the feedback pathway. Therefore, assuming this condition holds for all the layers and keeping everything else unchanged in the DTP algorithm, the DTP feedforward weight updates closely approach those of BP. This leads us to a scalable biologically plausible approximation of BP.

More specifically, our contributions are as follows:

\begin{itemize}
    \item We propose a novel \emph{Local Difference Reconstruction Loss} (L-DRL) along with an algorithm to train the feedback weights which ensures that the Jacobian of the feedback pathway matches the transpose of the Jacobian of the associated feedforward pathway (Section~\ref{subsec:feedback-loss}, Theorem~\ref{thm:feedback-loss}, Alg.~\ref{alg:L-DRL}). We call this condition the \emph{Jacobian Matching Condition} (JMC, \cref{def:JMC}).
    \item Assuming the JMC holds for a given architecture and using the standard DTP equations to propagate targets, we demonstrate that DTP feedforward weight updates approximate BP gradients (Section~\ref{subsec:feedforward-loss}, Theorem~\ref{thm:feedforward-loss}). We say that such an architecture satisfies the \emph{Gradient Matching Property} (GMP).
    \item We numerically demonstrate the GMP and JMC, showing that L-DRL is more efficient than DRL to align feedforward and feedback weights and that the GMP is subsequently significantly better satisfied 
    (Section~\ref{subsec:jmc}-\ref{subsec:GMP}). 
    \item Finally, we validate our novel implementation of DTP on training experiments on MNIST, Fashion MNIST, CIFAR-10 and  ImageNet 32$\times$32 \cite{NIPS2016_b1301141} (Section~\ref{subsec:train-experiments}). In particular, we achieve a 89.38 \% accuracy on CIFAR-10 and 60.6 \% top-5 accuracy on ImageNet 32$\times$32, which are the best performances ever reported in the DTP literature on these datasets and nearly match the performance of BP on the same architectures.
\end{itemize}

\section{Related Work}

DTP borrows several key concepts from the biologically plausible deep learning literature. First, resorting to a distinct set of weights to route error signals in the feedback pathway as done in DTP solves a problem known as \emph{weight transport} \cite{lillicrap2016random}. While having randomly initialized \emph{fixed} feedback weights is sufficient to carry useful error signals on MNIST \cite{lillicrap2016random,nokland2016direct}, subsequent studies demonstrated it was insufficient to scale to harder tasks \cite{moskovitz2018feedback, bartunov2018assessing, launay2019principled,crafton2019direct}. The main approach undertaken to overcome this issue is to add extra mechanisms to promote alignment between feedforward and feedback weights \cite{xiao2018biologically, lansdell2019learning, guerguiev2019spike, akrout2019deep, kunin2020two}. More specifically, many of these mechanisms are based on the idea of perturbing the feedforward activations with noise, and communicating the resulting noisy activations in the feedback pathway to coordinate feedback and feedforward weight updates consistently \cite{akrout2019deep, lansdell2019learning, kunin2020two}, which constitutes a second important feature of DTP. However, a limitation of many of these algorithms is that they require gradient computation of the operations carried out in the feedforward pathway. One solution to mitigate this issue, which is the third important ingredient of DTP, is to propagate neural activation differences as implicit error signals rather than error gradients \cite{lillicrap2020backpropagation}. These error signals may typically arise from a mismatch between feedforward (bottom-up) predictions and (top-down) actual feedback \cite{whittington2017approximation, NEURIPS2018_dendritic, choromanska2019beyond}, as it is the case for DTP, or from a perturbation from equilibrium \cite{scellier2017equilibrium}. Recent works have explored the application of DTP to recurrent neural networks \cite{manchev2020target, roulet2021target}, albeit with the implausible requirement of processing inputs backward in time during target computation, a challenge that we do not aim to address in the present paper. As emphasized in the introduction, the closest work to ours is that of \cite{meulemans2020theoretical}, and we show the theoretical and experimental advantages of our approach. 

\section{Background}

We first introduce the key notations and assumptions used throughout this paper.

\begin{definition}
We define a \emph{feedforward} architecture as:

\begin{equation}
    \mathcal{F}(x) = F^{N - 1} \circ F^{N-1} \circ \dots \circ F^0(x),
\end{equation}

where each feedforward module $F^{n}(\cdot; \theta^n)$ is parametrized by its feedforward weights $\theta^n$. Each $F^n$ is paired with a feedback module $G^n(\cdot; \omega^n)$ with distinct weights $\omega^n$.
\label{def:archi}
\end{definition}

\begin{definition}
We recursively define the \emph{layers} $s^1$, $\dots$, $s^N$ of an architecture $\mathcal{F}$ defined by \cref{def:archi} as: 

\begin{align}
\left \{
\begin{array}{ll}
 s^0 &= x \\
 s^{n + 1} &=F^n(s^{n};\theta^n)\quad \forall n = 0 \cdots N - 1
\end{array}
\right.
\end{align}

$G^n$ can either take as input the feedforward path activations $s^{n+1}=F^n(s^n)$ (with $G^n$ then forming the decoder part of a kind of auto-encoder with $F^n$ as encoder) or the backward path targets $t^{n+1}$ produced by $G^{n+1}$ from $t^{n+2}$ and representing targets for $s^{n+1}$.
\end{definition}

\paragraph{Learning setting.} We study the supervised context where, given a target $y$, the goal is to find the forward weights $\theta^n$ which minimize a predictive loss $\mathcal{L}_{\rm pred}(s^N, y)$.

\paragraph{Notations. } We denote the Frobenius dot product between two matrices $A$ and $B$ as $\langle A, B \rangle_F \triangleq {\rm Tr}\left(A \cdot B^\top\right)$. Also, we denote $\partial_x F(x_\star) = \frac{\partial F}{\partial x}(x_\star)$ the Jacobian of $F$ with respect to $x$ evaluated at $x_\star$. For notational simplicity, we may omit to write $x_\star$, in which case the Jacobians are implicitly evaluated on the feedforward activations.

\subsection{Difference Target Propagation (DTP)}

Instead of transporting the transpose Jacobian of the feedforward operators $\partial_{s^n} F^{n^\top}$ to the feedback pathway, TP and variants use a separate set of parameters through the feedback operator $G^n$ to carry targets across layers. The $G^n$ operators are subsequently trained, layer-wise, to approximately invert their associated feedforward counterpart: $G^n \approx \left(F^n\right)^{-1}$. TP learning thus entangles feedforward \emph{and} feedback weights training.

\paragraph{Forward weights training.}Target values for the neurons should be such that they decrease the predictive loss $\mathcal{L}_{\rm pred}$. For most TP algorithms, the first target is computed as:

\begin{equation}
  t^N_\beta = s^N -\beta\frac{\partial \mathcal{L}_{\rm pred}}{\partial s^N},
  \label{eq:first-target}  
\end{equation}

where $\beta$ is a small nudging parameter. 
In TP, the subsequent upstream targets are \emph{propagated} the the feedback operators as $t^{n}_\beta = G^n(t^{n+1}_\beta;\omega^n)$. However in non-invertible feedforward networks, this results in a significant reconstruction error $s^n - G^n(s^n;\omega^n)$, which was shown to compromise learning. \emph{Difference} Target Propagation \cite{lee2015difference} aims to solve this issue by removing this reconstruction term from the target computation:

\begin{equation}
    t^{n}_\beta =  G^{n}(t^{n + 1}_\beta; \omega^{n}) + s^{n} - G^{n}(s^{n + 1}; \omega^{n}). \label{eq:target-propagation}
\end{equation}

For later convenience, we denote 
\begin{equation}
\tilde{G}(t^{n+1}, s^{n+1};\omega^n) \triangleq G^{n}(t^{n + 1}_\beta; \omega^{n}) + s^{n} - G^{n}(s^{n + 1}; \omega^{n})
\label{eq:tilde-G}
\end{equation}
the feedback operation used to propagate the targets in Eq.~(\ref{eq:target-propagation}). Finally, the parameters $\theta^n$ are updated by the local loss $\mathcal{L}^n_\theta$, defined as:

\begin{equation}
    \mathcal{L}_{\theta}^n = \frac{1}{2\beta}\|t_\beta^n - s^n \|^2,
    \label{eq:def-ff-loss}
\end{equation}

where $t^n_\beta$ is treated as a constant and the gradients blocked at $s^{n-1}$. For example, if $F^n$ was linear, we would have the weight update $\Delta \theta^n \propto s^{n-1}\cdot (t^n_\beta - s^n)^\top$.

\paragraph{Feedback weights training.} Both TP and DTP employ the same mechanism to train the $G^n$ operators. First, the feedforward activations $s^n$ get a noisy perturbation $\epsilon$. The resulting noisy activations $s^n_\epsilon$ perturb the next layer $s^{n + 1}_\epsilon$ through $F^n$, which
in return yields a noisy reconstruction $r^n_\epsilon$ through $G^n$. The feedback weights are then updated to minimize the local loss $\mathcal{L}^n_\omega$ defined as:
\begin{equation}
\hat{\mathcal{L}}_{\omega}^n = \frac{1}{2} \|r_\epsilon^n - s^n_\epsilon\|^2,
\label{def:vanilla-dtp-fb-loss}
\end{equation}
where $s^n$ is treated as a constant and the gradient are blocked at $s^{n+1}$. Assuming again a linear $G^n$, the resulting feedback weight update also reads in a local fashion: $\Delta \omega^n \propto s^{n+1}\cdot(r^n_\epsilon - s^n_\epsilon)^\top$.

\begin{algorithm}[ht!]
    \caption{Standard DTP feedback weight training \\ \cite{lee2015difference}}
    \label{alg:vanilla-dtp-fb}
        \begin{algorithmic}[1]
         \STATE $\epsilon \sim \mathcal{N}(0, \sigma^2)$, $s^n_\epsilon=s^n + \epsilon$
         \STATE $s^{n+1}_\epsilon = F^n(s^n_\epsilon;\theta^n)$
         \STATE $r^n_{\epsilon} = G^n(s^{n+1}_\epsilon;\omega^n)$
         \STATE Update $\omega^n$ with $\hat{\mathcal{L}}_{\omega}^n = \frac{1}{2} \|r_\epsilon^n - s^n_\epsilon\|^2$.
    \end{algorithmic}
\end{algorithm}

\subsection{Connection between DTP and Gauss-Newton Optimization}

Using Eq.~(\ref{eq:first-target})-(\ref{eq:target-propagation}) and sending $\beta \to 0$, note that the DTP activation updates can be conveniently defined as:

\begin{equation}
    \delta_{\rm DTP}^n \triangleq  \lim_{\beta \to 0} \frac{t^n_\beta - s^n}{\beta} = - \left[\prod_{k=n}^{N-1} \partial_{s^{k+1}}G^k\right]\cdot \frac{\partial \mathcal{L}_{\rm pred}}{\partial s^N}
    \label{eq:def-delta-DTP}
\end{equation}

It was suggested that under some conditions, $\delta^n_{\rm DTP}$ encoded Gauss-Newton updates \cite{gauss1877theoria} of the layer activations with respect to the output loss function \cite{bengio2020deriving, meulemans2020theoretical}. In \emph{invertible} networks, i.e. assuming $\left(F^n\right)^{-1}$ exists, the Gauss-Newton update of layer $s^n$ with respect to $\mathcal{L}_{\rm pred}$ is:

\begin{equation}
    \delta^n_{\rm GN} = - \left[\partial_{s^n}\overline{F}^n \right]^{-1}\cdot \frac{\partial \mathcal{L}_{\rm pred}}{\partial s^N},
    \label{eq:invertible-GN-updates}
\end{equation}

where $\overline{F}^n = F^N \circ \cdots \circ F^n$ denotes the forward mapping from $s^n$ to $s^N$. Furthermore, assuming $G^n = F^{{n-1}^{-1}}$ for all $n=1\cdots N-1$, from Eq.~(\ref{eq:def-delta-DTP}), Eq.~(\ref{eq:invertible-GN-updates}) and the inverse function theorem, it can be seen that $\delta^n_{\rm GN}=\delta^n_{\rm DTP}$.

In \emph{non-invertible} networks, \citet{meulemans2020theoretical} show that with a block-diagonal approximation of the Gauss-Newton curvature matrix,
the Gauss-Newton update of $s^n$ with respect to $\mathcal{L}_{\rm pred}$ reads:

\begin{equation}
    \delta^n_{\rm GN} = - \left[\partial_{s^n}\overline{F}^n \right]^{\dagger}\cdot \frac{\partial \mathcal{L}_{\rm pred}}{\partial s^N},
    \label{eq:non-invertible-GN-updates}
\end{equation}

where $A^\dagger = \lim_{\lambda \to 0} A^\top \cdot \left(A\cdot A^\top - \lambda \right)^{-1}$ denotes the \emph{Moore-Penrose} pseudo-inverse.
However, there are are two reasons why in this case we may not have $\delta^n_{\rm GN}=\delta^n_{\rm DTP}$. First, using Eq.~(\ref{def:vanilla-dtp-fb-loss}) as a reconstruction loss, it may not hold in general that $\partial_{s^{n+1}}G^n = \left(\partial_{s^{n}}F^n \right)^\dagger$. Second, even assuming this condition holds, $\left[\partial_{s^n}\overline{F}^n \right]^{\dagger}$ generally does not factorize as $\prod_{k=n}^N\left[\partial_{s^k}F^k \right]^{\dagger}$. A direct consequence of this is that DTP standard layer-wise feedback weights training leads to mostly inefficient feedforward weight updates that fail to move the output layer towards its target.

\citet{meulemans2020theoretical} show that by adapting DTP standard feedback training scheme, $\delta^n_{\rm GN}=\delta^n_{\rm DTP}$ can be recovered for non-invertible networks. Instead of propagating perturbated activations $s_\epsilon^n$ back and forth through $F^n$ and $G^n$ into the reconstructed activation $r^n_\epsilon$ to train $\omega^n$, they prescribe sending $s_\epsilon^n$ up to $\hat{y}$ through $\overline{F}^n$, back into $r_\epsilon^n$ through $\tilde{G}^n \circ \cdots \circ \tilde{G}^N$ where $\tilde{G}^n$ (Eq.~(\ref{eq:tilde-G})) stands for the operator used for the target computation in Eq.~(\ref{eq:target-propagation}). Finally, an extra noisy perturbation in the output layer $s^N + \eta$ needs to be propagated back into $r^n_\eta$. The resulting \emph{Difference Reconstruction Loss} (DRL) to be optimized is defined as:

\begin{equation}
  \hat{\mathcal{L}}_\omega^n = \frac{1}{2}\|r_\epsilon^n - s^n_\epsilon\|^2 + \lambda \|r^n_\eta - s^n\|^2.  
  \label{eq:def-feedback-loss-meulemans}
\end{equation}

In practice though, they replace the second term by weight decay. Taking expectation of Eq.~(\ref{eq:def-feedback-loss-meulemans}) and sending the noise amplitude to 0, it can be shown that minimizing $\hat{\mathcal{L}}^n_\omega$ yields the desired property: $  \prod_{k=n}^{N-1} \partial_{s^{k+1}}G^k = \left(\partial_{s^n}\overline{F}^{n}\right)^\dagger$.
\begin{algorithm}[ht!]
    \caption{Difference Reconstruction Loss (DRL) feedback weight training \cite{meulemans2020theoretical}}
    \label{alg:meulemans-dtp-fb}
        \begin{algorithmic}[1]
         \STATE $\epsilon \sim \mathcal{N}(0, \sigma^2)$, $s^n_\epsilon=s^n + \epsilon$
        \FOR{$k=n \cdots N - 1$}
            \STATE $s^{k+1}_\epsilon = F^{k}(s^{k}_\epsilon;\theta^k)$
        \ENDFOR
        \STATE $r^N_\epsilon = s^N_\epsilon$
        \FOR{$k=N-1 \cdots n$}
            \STATE $r^k_\epsilon = G^k(r^N_\epsilon, r^{k+1}_\epsilon;\omega^k)+s^k - G^k(s^N, s^{k+1};\omega^k)$
        \ENDFOR
        
        \STATE Update $\omega^n$ with $\hat{\mathcal{L}}_\omega^n= \frac{1}{2}\|r_\epsilon^n - s^n_\epsilon\|^2 +
        \lambda \omega^n$.
        
        
    \end{algorithmic}
\end{algorithm}

\section{Learning Backprop Targets rather than Gauss-Newton Targets}

In the spirit of \citet{meulemans2020theoretical}, we propose to adapt the feedback weight training and the reconstruction loss, but we make it so that $G^n$ learns the \emph{transpose Jacobian} of its associated feedforward module $F^n$ rather than its pseudo-inverse. This way, by construction, the DTP weight updates are made to match \emph{BP} weight updates rather than a hybrid between BP and Gauss-Newton updates. We can also avoid the requirement of direct connections and restore layer-wise feedback weights training while preserving theoretical guarantees with respect to BP. 

\subsection{Feedback weights training}
\label{subsec:feedback-loss}

\begin{definition}
\label{def:JMC}
For a given architecture $\mathcal{F}$ defined by \cref{def:archi}, we say that a feedforward module $F^n$ and associated feedback module $G^n$ satisfy the \emph{Jacobian-Matching Condition} (JMC) if:

\begin{equation}
\left(\partial_{s^{n}} F^n(s^n)\right)^\top  = \partial_{s^{n+1}} G^n(s^{n+1})
\label{eq:jmc}
\end{equation}

We say that an architecture $\mathcal{F}$ satisfies the JMC if for $n=1\cdots N$, $(F^n, G^n)$ satisfy the JMC. 
\end{definition}

To illustrate our proposed algorithm to train the feedback weights, let us consider the feedforward module $F^n$ and associated feedback module $G^n$. Let $\epsilon \sim \mathcal{N}(0, \sigma^2)$ be a perturbation to input feature $s^n$ so that the resulting noisy activations $s^n_\epsilon$ triggers a noisy perturbation in the next layer $s^{n+1}_\epsilon$ through $F^n$. Then, we assume $s^{n+1}_\epsilon$ yields in turn a noisy reconstruction $r^n_\epsilon$ through $\tilde{G}^n$ from Eq.~(\ref{eq:tilde-G}) (rather than $G^n$).
Furthermore, we let $\eta \sim \mathcal{N}(0, \sigma^2)$ be a second source of noise in layer $s^{n+1}$. The resulting noisy activations $s^{n+1}_\eta$ create the noisy reconstructions $r^n_\eta$ again through $\tilde{G}^n$. We then prescribe updating the feedback weights with the \emph{Local Difference Reconstruction Loss} (L-DRL) which we define as:

\begin{equation}
    \hat{\mathcal{L}}_\omega^n = - \epsilon^\top \cdot \left(r^n_\epsilon - s^n \right)+ \frac{1}{2}\| r^n_\eta - s^n\|^2
    \label{eq:def-new-fb-loss}.
\end{equation}

\begin{algorithm}[H]
    \caption{Local Difference Reconstruction Loss \\
    (L-DRL) \label{alg:L-DRL}}
        \begin{algorithmic}[1]
        \FOR{i = 1 to K}
         \STATE $s^{n+1} = F^n(s^n;\theta^n)$
         \STATE $\epsilon \sim \mathcal{N}(0, \sigma^2)$, $s^n_\epsilon = s^n + \epsilon$
         \STATE $s^{n+1}_\epsilon = F^n(s^{n}_\epsilon;\theta^n)$
         \STATE $r^n_\epsilon = G^n(s^{n+1}_\epsilon;\omega^n) - G^n(s^{n+1};\omega^n) + s^n$
         \STATE $\eta \sim \mathcal{N}(0, \sigma^2)$, $r^{n+1}_\eta = s^{n+1} + \eta$
         \STATE $r^n_\eta = G^n(r^{n+1}_\eta;\omega^n) - G^n(s^{n+1};\omega^n) + s^n$
         \STATE Update $\omega_n$ to descend layer-wise loss $\mathcal{L}_\omega^n$:
         \STATE \hspace*{1cm}$\hat{\mathcal{L}}_\omega^n = - \epsilon^\top \cdot \left(r^n_\epsilon - s^n \right)+ \frac{1}{2}\| r^n_\eta - s^n\|^2$
        \ENDFOR
    \end{algorithmic}
\end{algorithm}

Contrary to existing DTP approaches, the above procedure is repeated $K$ times per training batch, so that feedback weights can quickly and locally (per-layer) adapt on the fly to the feedforward activations and recent feedforward weight updates. This avoids interleaving phases of \emph{pure} feedback weight training with frozen feedforward weights and instead makes it possible to train feedback and feedforward weights together from the beginning. 

We now state Theorem~\ref{thm:feedback-loss} which guarantees that minimizing $\mathcal{L}^n_{\omega}$ as defined in Eq.~(\ref{eq:def-new-fb-loss}) yields the JMC for layer $n$. 

\begin{theorem}
\label{thm:feedback-loss}
Let: 
\begin{align}
    \widehat{\mathcal{L}}_\omega^n &= -\frac{1}{\sigma^2}\epsilon^\top \cdot \left( r^n_\epsilon - s^n\right)+ \frac{1}{2 \sigma^2}\left\| r_\eta^n - s^n\right\|^2, \\
    \mathcal{L}_\omega^n &= \frac{1}{2}\left \|\partial_{s^n}F^{n^\top} - \partial_{s^{n+1}}G^{n+1} \right\|^2_F.
\end{align}

Then:

\begin{align}
\lim_{\sigma \to 0}\mathbb{E}_{\epsilon, \eta}\left[ \hat{\mathcal{L}}_\omega^n\right] &= -\left \langle \partial_{s^n} F^{n^\top}, \partial_{s^{n+1}}G^n\right\rangle_F \nonumber\\
&+ \frac{1}{2} \left\| \partial_{s^{n+1}}G^n \right \|_F^2\\
\frac{\partial}{\partial \omega}\lim_{\sigma \to 0}\mathbb{E}_{\epsilon, \eta}\left[ \hat{\mathcal{L}}_\omega^n\right] &= \frac{\partial \mathcal{L}_\omega^n}{\partial \omega},
\end{align}
\end{theorem}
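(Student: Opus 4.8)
The plan is to read the two reconstruction residuals off Algorithm~\ref{alg:L-DRL}, Taylor-expand them to second order in the injected noise, take the Gaussian expectations term by term, and then let $\sigma\to 0$; the claim about $\omega$-gradients then follows by inspection. Abbreviate $J_F \triangleq \partial_{s^n}F^n$ and $J_G \triangleq \partial_{s^{n+1}}G^n$, both evaluated at the feedforward activations. Because the reconstructions are in \emph{difference} form, the constant $s^n$ and the $-G^n(s^{n+1})$ terms cancel, leaving
\begin{align}
r^n_\epsilon - s^n &= G^n\!\bigl(F^n(s^n+\epsilon)\bigr) - G^n(s^{n+1}), \\
r^n_\eta - s^n &= G^n(s^{n+1}+\eta) - G^n(s^{n+1}).
\end{align}
A second-order expansion gives $r^n_\epsilon - s^n = J_G J_F\,\epsilon + \tfrac12 H_\epsilon[\epsilon,\epsilon] + \rho_\epsilon$ with $\rho_\epsilon = o(\|\epsilon\|^2)$, where $H_\epsilon$ is the vector-valued quadratic built from the Hessians of $F^n$ and $G^n$, and similarly $r^n_\eta - s^n = J_G\,\eta + \tfrac12 H_\eta[\eta,\eta] + \rho_\eta$.

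First I would handle the linear term $-\tfrac{1}{\sigma^2}\epsilon^\top(r^n_\epsilon - s^n)$. Substituting the expansion, the leading piece is $-\tfrac{1}{\sigma^2}\epsilon^\top J_G J_F\,\epsilon$, whose expectation is $-{\rm Tr}(J_G J_F)$ since $\mathbb{E}[\epsilon\epsilon^\top] = \sigma^2 I$; the term $\epsilon^\top H_\epsilon[\epsilon,\epsilon]$ is a degree-three polynomial in a centered Gaussian, hence has zero expectation exactly, and $\mathbb{E}[\epsilon^\top \rho_\epsilon]/\sigma^2 = o(1)$. Using ${\rm Tr}(J_G J_F) = {\rm Tr}\!\bigl(J_G (J_F^\top)^\top\bigr) = \langle J_F^\top, J_G\rangle_F$, this term contributes $-\langle \partial_{s^n}F^{n^\top}, \partial_{s^{n+1}}G^n\rangle_F$ in the limit. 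Analogously, for $\tfrac{1}{2\sigma^2}\|r^n_\eta - s^n\|^2$ the leading piece is $\tfrac{1}{2\sigma^2}\eta^\top J_G^\top J_G\,\eta$ with expectation $\tfrac12 {\rm Tr}(J_G^\top J_G) = \tfrac12\|J_G\|_F^2$, while the cross term $(J_G\eta)^\top H_\eta[\eta,\eta]$ is a zero-mean cubic and the remaining contributions are $o(1)$ after dividing by $\sigma^2$. Since $\epsilon$ and $\eta$ are independent and occur in separate summands, $\mathbb{E}_{\epsilon,\eta}$ factorizes and, on letting $\sigma\to 0$, we obtain exactly the first displayed identity.

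For the second identity, observe that $\partial_{s^n}F^{n^\top}$ is a function of $\theta^n$ only, not of $\omega$, so expanding the square $\mathcal{L}_\omega^n = \tfrac12\|\partial_{s^n}F^{n^\top}\|_F^2 - \langle \partial_{s^n}F^{n^\top}, \partial_{s^{n+1}}G^n\rangle_F + \tfrac12\|\partial_{s^{n+1}}G^n\|_F^2$ shows that $\partial \mathcal{L}_\omega^n/\partial\omega$ equals the $\omega$-derivative of the last two summands, which is precisely the $\omega$-derivative of the closed-form limit computed above; no interchange of limit and derivative is needed, since that limit is already an explicit smooth function of $\omega$. The one genuinely technical point — and the step I expect to be the main obstacle — is the control of the Taylor remainders: one needs $F^n$ and $G^n$ to be sufficiently smooth (at least $C^2$, and $C^3$ with locally bounded derivatives to bound things cleanly) so that $\mathbb{E}[\epsilon^\top\rho_\epsilon]$ and the analogous $\eta$-remainder terms are $o(\sigma^2)$, e.g. via a Lagrange-form remainder bound combined with the Gaussian moment estimates $\mathbb{E}[\|\epsilon\|^k] = O(\sigma^k)$. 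Everything else reduces to bookkeeping with second Gaussian moments and the cyclic property of the trace.
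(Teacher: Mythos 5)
Your proposal is correct and follows essentially the same route as the paper's proof: Taylor-expand the two difference-form residuals around the feedforward activations, use $\mathbb{E}[\epsilon\epsilon^\top]=\mathbb{E}[\eta\eta^\top]=\sigma^2 I$ and the cyclic trace identity to get the Frobenius inner product and norm, send $\sigma\to 0$, and then observe that the limit differs from $\mathcal{L}_\omega^n$ only by the $\omega$-independent term $\tfrac12\|\partial_{s^n}F^n\|_F^2$. Your treatment is in fact slightly more careful than the paper's (which stops at first order with an $o(\|\epsilon\|)$ remainder), since you explicitly note that the cubic Gaussian cross-terms vanish exactly and flag the smoothness needed to control the Taylor remainders.
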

This means that training the feedback weights with respect to the local layer loss of Eq.~(\ref{eq:def-new-fb-loss}) makes the feedback path compute the Jacobian of the feedforward path in the limit of small noise and in expectation over the noisy samples.

\subsection{Feedforward weight training}
\label{subsec:feedforward-loss}
Although our new implementation of DTP uses the exact same equations as standard DTP to propagate the targets (Eq.~(\ref{eq:first-target})-Eq.~(\ref{eq:target-propagation})) and update the forward weights (Eq.~(\ref{eq:def-ff-loss})), they acquire a very different meaning with our novel feedback weights training scheme. If we assume that an architecture $\mathcal{F}$ satisfies the JMC upon applying Alg.~(\ref{alg:L-DRL}) with fixed feedforward weights, then combining Eq.~(\ref{eq:def-delta-DTP}) and Eq.~(\ref{eq:jmc}) yields:
\begin{equation}
    \delta^n_{\rm GN} =- \left[\prod_{k=n}^{N-1} \partial_{s^{k}}F^{k^\top}\right]\cdot \frac{\partial \mathcal{L}_{\rm pred}}{\partial s^N} = \delta^n_{\rm BP},
\end{equation}
where $\delta^n_{\rm BP}$ denotes the activation updates computed by BP. Subsequently, given that the feedforward loss $\mathcal{L}_\theta^n$ defined in Eq.~(\ref{eq:def-ff-loss}) is updated by gradient descent, the whole DTP gradient computing scheme exactly implements BP rather than a hybrid between gradient descent and Gauss-Newton optimization. We now formally state our result.

\begin{theorem}[Gradient Matching Property]
\label{thm:feedforward-loss}
Let a feedforward architecture $\mathcal{F}$ defined per \cref{def:archi} which satisfies the JMC. Then the following holds:
\begin{equation}
    \forall n \in [1, N], \qquad \frac{\partial \mathcal{L}_{\rm pred}}{\partial \theta^n} = \lim_{\beta \to 0} \frac{1}{2\beta}\frac{\partial }{\partial \theta^n}\|t^n_\beta - s^n\|^2,
    \label{eq:thm-dtp-1}
\end{equation}
where the targets $(t^n_\beta )_{n\geq 1}$ obey the following recursive equations, $\forall n = N - 1 \dots 1$:
\begin{align}
\left\{
\begin{array}{l}
t^N_\beta = s^N -\beta\frac{\partial \mathcal{L}_{\rm pred}}{\partial s} \\
t^{n}_\beta = s^{n} + G^n(t_\beta^{n + 1};\omega^n) - G^n(s^{n + 1};\omega^n)
\label{eq:rec-eq-2}
\end{array}
\right.
\end{align}
\end{theorem}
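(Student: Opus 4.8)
The plan is to reduce the identity Eq.~(\ref{eq:thm-dtp-1}) to a statement purely about the DTP activation updates $\delta^n_{\rm DTP}$ of Eq.~(\ref{eq:def-delta-DTP}), then to show that the JMC forces $\delta^n_{\rm DTP}$ to coincide with the backprop error signal $-\,\partial\mathcal{L}_{\rm pred}/\partial s^n$, and finally to recover the claim by one application of the ordinary feedforward chain rule. This is essentially the argument sketched in the paragraph preceding the theorem, made precise.

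\textbf{Step 1: reduce to activation updates.} In the feedforward loss Eq.~(\ref{eq:def-ff-loss}) the target $t^n_\beta$ is held constant and the gradient is blocked below layer $n$, so $\tfrac{1}{2\beta}\tfrac{\partial}{\partial\theta^n}\|t^n_\beta-s^n\|^2 = -\tfrac{1}{\beta}(t^n_\beta-s^n)^\top J^n_\theta$, where $J^n_\theta \triangleq \partial_{\theta^n}s^n$ is the Jacobian of $s^n$ with respect to (vectorized) $\theta^n$. Hence, provided the limit $\lim_{\beta\to0}(t^n_\beta-s^n)/\beta$ exists, the right-hand side of Eq.~(\ref{eq:thm-dtp-1}) equals $-(\delta^n_{\rm DTP})^\top J^n_\theta$. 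On the other side, the ordinary chain rule through the module parametrized by $\theta^n$ gives $\partial\mathcal{L}_{\rm pred}/\partial\theta^n = (\partial\mathcal{L}_{\rm pred}/\partial s^n)^\top J^n_\theta$. So it suffices to show that the limit exists and equals Eq.~(\ref{eq:def-delta-DTP}), and that $\delta^n_{\rm DTP} = -\,\partial\mathcal{L}_{\rm pred}/\partial s^n$ under the JMC.

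\textbf{Step 2: linearize the target recursion and invoke the JMC.} I would prove by downward induction on $n$ that $t^n_\beta = s^n + \beta\,\delta^n_{\rm DTP} + o(\beta)$, with $\delta^n_{\rm DTP}$ as in Eq.~(\ref{eq:def-delta-DTP}). The base case $n=N$ is Eq.~(\ref{eq:first-target}), giving $\delta^N_{\rm DTP} = -\,\partial\mathcal{L}_{\rm pred}/\partial s^N$. For the inductive step, substitute $t^{n+1}_\beta = s^{n+1}+\beta\,\delta^{n+1}_{\rm DTP}+o(\beta)$ into the recursion $t^n_\beta = s^n + G^n(t^{n+1}_\beta;\omega^n) - G^n(s^{n+1};\omega^n)$ and Taylor-expand $G^n$ about $s^{n+1}$ — legitimate since $t^{n+1}_\beta\to s^{n+1}$ and $G^n$ is smooth — to obtain $t^n_\beta - s^n = \partial_{s^{n+1}}G^n(s^{n+1})\cdot\beta\,\delta^{n+1}_{\rm DTP}+o(\beta)$, i.e. $\delta^n_{\rm DTP} = \partial_{s^{n+1}}G^n(s^{n+1})\,\delta^{n+1}_{\rm DTP}$; unrolling gives Eq.~(\ref{eq:def-delta-DTP}). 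Crucially, every Jacobian produced here is evaluated at a feedforward activation, which is exactly where the JMC (\cref{def:JMC}) is assumed; replacing each $\partial_{s^{k+1}}G^k(s^{k+1})$ by $(\partial_{s^k}F^k(s^k))^\top$ for $k=n,\dots,N-1$ turns the product into $(\partial_{s^n}F^n)^\top\cdots(\partial_{s^{N-1}}F^{N-1})^\top$, which is precisely the chain-rule factor carrying $\partial\mathcal{L}_{\rm pred}/\partial s^N$ back to $\partial\mathcal{L}_{\rm pred}/\partial s^n$ along $\mathcal{F}$. Hence $\delta^n_{\rm DTP} = -\,\partial\mathcal{L}_{\rm pred}/\partial s^n$; combining with Step 1 gives Eq.~(\ref{eq:thm-dtp-1}) for this $n$, and ranging over $n\in[1,N]$ finishes the proof.

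\textbf{Main obstacle.} The only genuinely technical point is controlling the $o(\beta)$ remainders through the $N-n$ nested compositions $G^n,\dots,G^{N-1}$ in Step 2, so that they still vanish after dividing by $\beta$: one must carry through the induction a uniform bound (on a fixed neighbourhood of the feedforward activations) on the second-order Taylor remainder of each $G^k$, which presupposes a smoothness assumption on the modules. Everything else is routine, although the write-up should state explicitly that the JMC is used only \emph{at the feedforward activations} — this is what makes the linearization legitimate — and should keep the index convention linking $\theta^n$ to the layer $s^n$ it produces consistent with \cref{def:archi} and Eq.~(\ref{eq:def-ff-loss}).
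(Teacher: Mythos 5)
Your proposal is correct and follows essentially the same route as the paper's proof: linearize the target recursion to get $t^n_\beta - s^n = \beta\,\partial_{s^{n+1}}G^n\cdot\delta^{n+1}_{\rm DTP}+o(\beta)$ by downward induction, divide by $\beta$, invoke the JMC at the feedforward activations to turn the product of feedback Jacobians into the transposed feedforward chain-rule product, and conclude via the parameter Jacobian. Your explicit attention to the $o(\beta)$ remainders and to the $\theta^n$/$s^n$ index convention is, if anything, more careful than the paper's write-up, which glosses over both points.
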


\section{Experiments}
In this section, we present several experimental results supporting the above theory. We first numerically demonstrate the claims stated by Theorem~\ref{thm:feedback-loss} and Theorem~\ref{thm:feedforward-loss}, thereby showing the efficiency of the proposed approach to align feedforward and feedback weights (JMC) and subsequently compute DTP feedforward weight updates well aligned with BP gradients (GMP). Next, we present training simulation results on MNIST, F-MNIST and CIFAR-10, where our approach significantly outperforms \citet{meulemans2020theoretical}'s DTP. Finally, we report the best results ever obtained on ImageNet 32$\times$32 by a DTP algorithm. 

\subsection{Demonstrating the JMC}
\label{subsec:jmc}

\paragraph{Experimental set-up.} The goal of the following experiment is to compare \citet{meulemans2020theoretical}'s  DRL algorithm with our L-DRL approach in terms of their ability to align the (transposed) feedforward weights and their associated feedback weights for the last fully connected layer, and thereby realize the JMC in the output layer. We perform this test with randomly initialized and \emph{fixed} feedforward weights and on a \emph{single} randomly selected input batch $x$ (for a given seed). The choice of focusing only on the output layer is justified below. 

\paragraph{Architecture.} We consider a random batch of CIFAR-10 data along with a LeNet \cite{lecun1989backpropagation} architecture consisting of two convolutional layers and two fully connected (FC) layers. For both algorithms, we use the same feedforward pathway for the model. However since regular DRL prescribes by construction direct connections in the feedback pathway, the form of the $G^n$ functions used depends on the feedback algorithm used. \emph{For DRL}, we use the DDTP-linear architecture as per \citet{meulemans2020theoretical}, where the output layer are directly connected to each upstream layer via linear connections. Therefore, the parameters of the resulting $G^n$ functions have dimension ${\rm dim}(\omega^n) = s^n \times s^N$ for $n = N-1, \cdots, 1$. However, since the associated feedforward parameters $\theta^n$ have dimension ${\rm dim} (\theta^n) = s^{n+1} \times s^n$, we can only readily compare $\theta^{N^\top}$ and $\omega^N$ in the last FC layer. \emph{For L-DRL}, we use layer-wise $G^n$ functions such that ${\rm dim}(\omega^n) = s^n \times s^{n + 1}$ for $n = N-1, \cdots, 1$.
Full architecture details are included in the Appendix. 

\paragraph{Results.}
We illustrate  in Fig.~\ref{fig:thm-4.2} the results obtained. We show the angle (in degrees) and the relative distance between the last layer feedforward ($\theta^{N^\top}$) and feedback weights ($\omega^N$) throughout pure feedback training on a single input batch. Therefore, each feedback training iteration here corresponds to a feedback weight update on the \emph{same} input batch (for a given seed). However, we do use different input batches across different seeds. For each algorithm, the amount of noise and learning rates have been carefully tuned to achieve the minimal angles and distances after 5000 iterations, which we empirically found to be large enough to reach convergence for both algorithms. We observe that L-DRL achieves an angle of $\approx 3^\circ$ and a relative distance of $\approx 0$, while DRL can only reduce these quantities to $18.7^\circ$  and $\approx 1.8$ respectively. These results confirm that our L-DRL is more suited than DRL to achieve the JMC in the output layer. 

\begin{figure}[ht!]
  \begin{center}
    \includegraphics[width=0.50\textwidth]{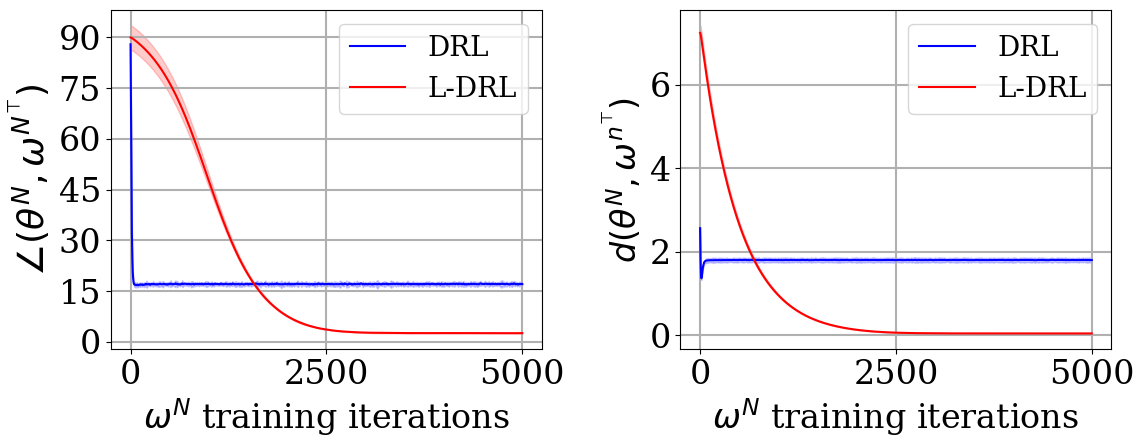}
  \end{center}
  \caption{Angle in degrees ($\angle(\theta^N, \omega^{N^\top})$) and relative distance ($d(\theta^N, \omega^{N^\top})$) between $\theta^N$ and $\omega^{N^\top}$  throughout feedback weight learning with L-DRL (ours) and DRL \cite{meulemans2020theoretical} with \emph{fixed} feedforward weights. 
  }
    \label{fig:thm-4.2}
\end{figure}

\subsection{Demonstrating the GMP}
\label{subsec:GMP}

\paragraph{Experimental set-up.} In this experiment, we want to demonstrate the ability of our proposed DTP to compute \emph{feedforward} weight updates closely matching those prescribed by BP (therefore achieving the GMP), assuming that the JMC is initially satisfied, as hypothesized by Theorem~\ref{thm:feedforward-loss}. Again here, we assume a \emph{single} randomly selected input batch $x$ (also with different input across different seeds). In contrast with the previous experiment though, we carry out this analysis across \emph{all} the layers. Indeed, regardless of the form of the $G^n$ functions (whether we use direct connections or not), the DTP feedforward weight updates can always be compared against those of BP. Given randomly sampled feedforward parameters $\theta^n$, we study five different feedback weight initialization schemes and associated targets computation: (a) $\omega^n$ are random and targets are computed through the DDTP-linear feedback pathway (${\rm DRL}_{\rm random}$); (b) same as (a) with targets computed through the layer-wise feedback pathway (L-${\rm DRL}_{\rm random}$); (c) $\omega^n$ are trained with DRL (DRL); (d) $\omega^n$ are trained with L-DRL (L-DRL); (e) Finally, $\omega^n = \theta^{n^\top}$ with targets propagated through the layer-wise feedback pathway (L-${\rm DRL}_{\rm sym}$).
For each of these situations, the feedforward DTP weight updates are thereafter obtained with Eq.~(\ref{eq:def-ff-loss}) on the one hand. On the other hand, we compute BP gradients via standard BP through the feedforward pathway. 

\paragraph{Architecture.} The architecture used for this experiment is the same LeNet architecture than the one used for the previous experiment, with two convolutional layers and two fully connected layers. 

\paragraph{Results.} We show on Fig.~\ref{fig:thm-4.3} the results obtained. The blue, red, green and purple bars correspond to the angle between DTP feedforward weight updates and those of BP ($\angle(\Delta \theta^n_{\rm DTP},\Delta \theta^n_{\rm BP})$) for the first Conv, second Conv, first FC and second FC layers respectively: the lower $\angle(\Delta \theta^n_{\rm DTP},\Delta \theta^n_{\rm BP})$, the more the GMP is satisfied. We show these quantities for each of the five feedback weight initialization mentioned above. We observe that upon training the feedback weights with L-DRL (compared to a random configuration), the GMP is significantly better satisfied ($\angle(\Delta \theta^n_{\rm DTP},\Delta \theta^n_{\rm BP})$ going from $\approx 90^\circ$ to $\lessapprox35^\circ$) than when trained with DRL ($\lessapprox 79^\circ$)), and almost as well as in the ideal situation with symmetrically initialized weights. Overall, these results confirm the prediction of Theorem~\ref{thm:feedforward-loss}.

\begin{figure}[ht!]
  \begin{center}
    \includegraphics[width=0.46\textwidth]{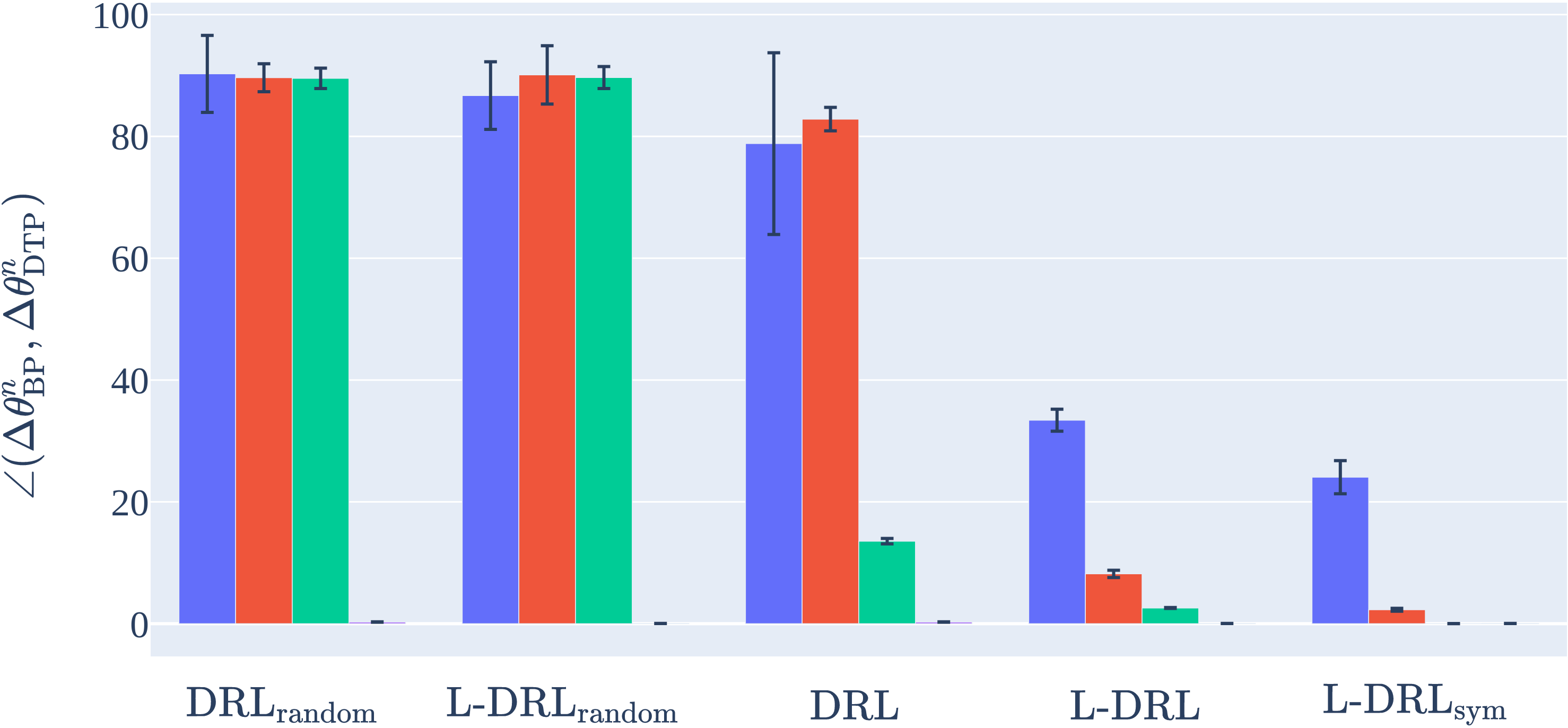}
  \end{center}
  \caption{Angle between the forward weight updates obtained through L-DRL (ours) or DRL \cite{meulemans2020theoretical} and those obtained through BP, for each layer, under various initial conditions.
  }
    \label{fig:thm-4.3}
\end{figure}

\subsection{DTP learning dynamics}
\label{subsec:train-experiments}

\paragraph{Experimental set-up.} We present here our training experiments obtained on MNIST, F-MNIST and CIFAR-10 with our implementation of DTP (refered to as ``DTP'' or ``Ours'' below) and that of \citet{meulemans2020theoretical} which we will refer to as ``DDTP''. While the previous DRL/L-DRL terminology concerns feedback weights training specifically, the term ``DDTP'' is used here to refer to the resulting feedforward weights training algorithm when GN targets are being computed, rather than the architecture itself. Also, we want to emphasize that two features of DDTP fundamentally differs from our DTP. First, our DTP is made to emulate BP while DDTP is a hybrid between GN optimization and BP as highlighted previously. Second, while DDTP employs feedback weights pre-training and subsequent interleaved epochs of pure feedback weights training, our DTP trains together \emph{at all times} feedforward weights and feedback weights and allowing for \emph{multiple} feedback weight updates per mini-batch. To disentangle these two aspects and ensure a fair comparison between our DTP and DDTP, we propose two different implementations of DDTP. 

\emph{Simple} DDTP (``s-DDTP'') is the standard DDTP implementation of \citet{meulemans2020theoretical} that yields their best training results. For s-DDTP, training starts with $N_{\omega, \rm i}$ epochs of pure feedback weights training, then at each subsequent epoch feedback weights and feedforward weights are both updated once per batch, and each of these epoch is followed by $N_{\omega, \rm e}$ epoch of pure feedback training. Therefore, denoting $N_{\rm \theta}$ the number of epochs where the feedforward weights are trained, 
there are $N_{\rm \omega} = N_{\rm \omega, i} + N_{\theta}\times (1 + N_{\omega, \rm e})$ epochs where the feedback weights are trained, therefore $\mathcal{O}(N_{\rm \omega, i} + N_{\theta}\times (1 + N_{\omega, \rm e}))$ feedback weight updates.

We define \emph{Parallel} DDTP (``p-DDTP'') as a variant of DDTP where there is no initial feedback pre-training ($N_{\omega, \rm i} = 0$), nor interleaved epochs of pure feedback training ($N_{\omega, \rm e} = 0$), but where feedback weights and feedforward weights are always trained altogether, with $K$ feedback weight updates per batch, yielding $\mathcal{O}(N_{\theta} \times K)$ feedback weight updates. Therefore, p-DDTP has the same complexity cost for feedback weights training as in our DTP. We use the same architecture in this study as in Section ~\ref{subsec:jmc}-\ref{subsec:GMP}.

\begin{figure*}[ht!]
  \begin{center}
    \includegraphics[width=0.9\textwidth]{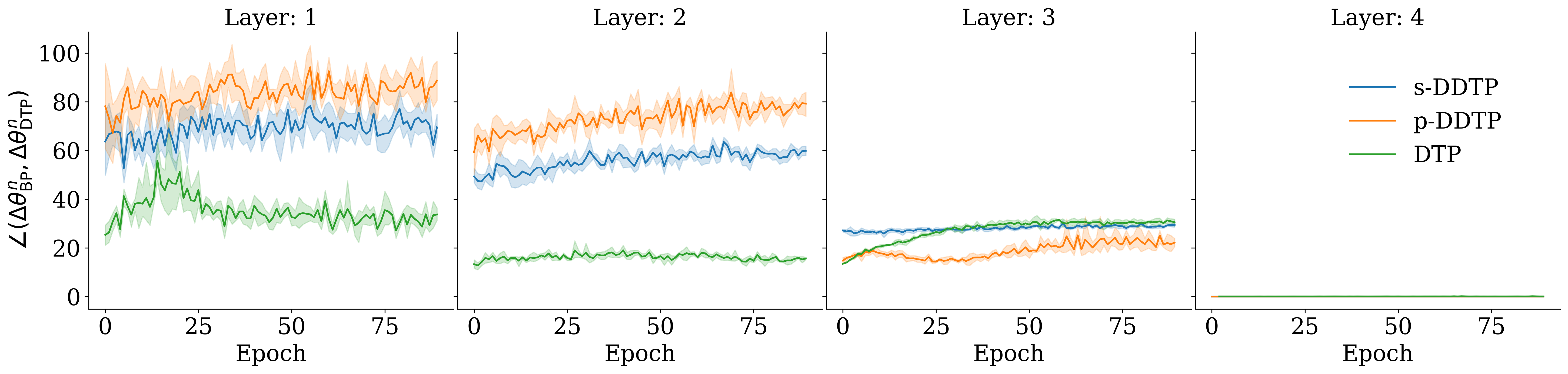}
  \end{center}
  \caption{Angle between forward weight updates obtained through DTP (ours), s-DDTP, p-DDTP and those obtained through back-propogation for each layer throughout training on CIFAR-10 with LeNet architecture. Each epoch represents a feedforward training epoch, pure feedback training epochs for s-DDTP are not displayed. 
  }
    \label{fig:bpweightupdatesangle}
\end{figure*}

\paragraph{Results.} We display in Table~\ref{tab:res-lenet} the  accuracies obtained with our DTP, s-DDTP and p-DDTP on MNIST, Fashion MNIST (``F-MNIST'') and CIFAR-10. Our DTP outperforms s-DDTP and p-DDTP on all tasks, by $\approx 0.3 \%$
on MNIST and F-MNIST, by at least $\approx 9\%$ on CIFAR-10 and is within $\approx 1\%$ of the BP baseline performance. While p-DDTP slightly outperforms s-DDTP on MNIST and F-MNIST, it performs worse than s-DDTP on CIFAR-10, suggesting that DDTP does not benefit much from multiple feedback weight updates per batch. An important conclusion to be drawn here is that the gap in performance between our DTP and DDTP is not due to updating the feedback weights multiple times per training batch, but more fundamentally to the feedback training scheme at use (L-DRL for our DTP, or DRL for DDTP), yielding better feedforward error signals with our DTP. This conclusion is also confirmed by Fig.~\ref{fig:bpweightupdatesangle} where we plot the angle between the DTP feedforward weight updates and those of BP ($\angle(\Delta \theta^n_{\rm BP}, \Delta \theta^n_{\rm DTP})$) throughout learning CIFAR-10, for each layer and each algorithm. While the angles obtained by DTP, s-DDTP and p-DDTP are comparable for the last two (FC) layers ($\approx 0^\circ$), they are at least twice as smaller for DTP compared to s-DDTP and p-DDTP in the first two (Conv) layers. Finally, these angles are $\approx 15^\circ$ smaller for s-DDTP compared to p-DDTP. Consequently, these curves directly account for the discrepancies in results on CIFAR-10 reported in Table~\ref{tab:res-lenet}.

\begin{table}[ht]
\caption{Accuracies ($\%$) obtained with BP, DDTP and our DTP on the LeNet architecture for MNIST, F-MNIST and CIFAR-10 test set. Each result is in terms of the mean and standard deviation obtained over five different seeds.}
\label{tab:res-lenet}
\vskip 0.15in
\begin{center}
\begin{small}
\begin{sc}
\begin{tabular}{lccc}
\toprule
{} & MNIST & F-MNIST & CIFAR-10  \\
\midrule

s-DDTP &$98.59_{\pm 0.16} $  & $88.86_{\pm 0.44} $ & $76.33_{\pm 0.27} $\\
p-DDTP & $98.58_{\pm 0.13} $ &  $89.42_{\pm 0.69} $ & $72.15_{\pm 0.29} $\\
{\bfseries Ours} &  $ \mathbf{98.93_{\pm 0.04}}$ & $\mathbf{90.91_{\pm 0.17}}$ & $\mathbf{85.33_{\pm 0.32}}$ \\
\midrule
BP  & $98.92_{\pm 0.04} $ & $91.94_{\pm 0.33}$ & $86.34_{\pm 0.35} $ \\
\bottomrule
\end{tabular}
\end{sc}
\end{small}
\end{center}
\vskip -0.1in
\end{table}

\subsection{Towards scaling up DTP}
Since we have demonstrated that our DTP learns better error signals to update the feedforward weights than DDTP with consistently better performance, we focus in this section on learning a slightly deeper and wider architecture on CIFAR-10 and ImageNet 32$\times$32 \cite{NIPS2016_b1301141}, a downsampled version of the full ImageNet data. For these experiments, we employ a 6-layers VGG-like architecture, consisting of 5 Conv layers and 1 FC layer (see Appendix for architecture details). 
\paragraph{Results.} We report our results in Table~\ref{tab:res-simplevgg}. With this choice of architecture, our DTP achieves $89.38\%$ accuracy on CIFAR-10 and $60.54\%$ top-5 accuracy on ImageNet 32 $\times$ 32, which is both cases within $< 1\%$ of the BP baseline.
\begin{table}[ht!]
\caption{Accuracies ($\%$) obtained on CIFAR-10 with BP and our DTP on a VGG-like architecture. Each result is in terms of the mean and standard deviation obtained over five different seeds. We also report below the current best CIFAR-10 accuracies obtained by DTP in the literature on any architecture.}
\label{tab:res-simplevgg}
\vskip 0.15in
\begin{center}
\begin{small}
\begin{sc}
\begin{tabular}{cc}
\toprule
& Accuracy \\
\midrule
BP & $89.07_{\pm 0.22} $ \\
{\bfseries Ours} & $\mathbf{89.38_{\pm 0.20 }}$ \\  
\hline
\citet{meulemans2020theoretical}& $76.01$ \\
\citet{bartunov2018assessing} & $60.53$ \\
\bottomrule
\end{tabular}
\end{sc}
\end{small}
\end{center}
\vskip -0.1in
\end{table}
\begin{table}[ht!]
\caption{Top-1 and Top-5 Accuracies for ImageNet 32$\times$32 validation set obtained with BP and our DTP on a VGG-like architecture on a single seed.}
\label{tab:res-2}
\vskip 0.15in
\begin{center}
\begin{small}
\begin{sc}
\begin{tabular}{lcc}
\toprule
 & Top-1 & Top-5 \\
\midrule
BP& $37.47 $ & $61.25$ \\
Ours & $36.81$ & $60.54$ \\
\bottomrule
\end{tabular}
\end{sc}
\end{small}
\end{center}
\vskip -0.1in
\end{table}

\section{Discussion}\vspace{-5pt}
Training feedforward weights with Gauss-Newton targets results in optimal updates to move the feedforward activations towards their associated target, yet they appear sub-optimal to decrease the prediction loss \cite{meulemans2020theoretical}, which calls for the design of a principled way to build backprop-like targets. In this work, we have demonstrated the benefits of such an approach, with mathematically and experimentally grounded arguments. We showed the efficiency of our L-DRL algorithm to align feedforward and (transposed) feedback weights  and therefore achieve the Jacobian matching condition (JMC). We also showed that the resulting feedforward weight updates prescribed by Difference Target Propagation (DTP) closely match those of BP, a property we called the gradient-matching property (GMP). Our DTP implementation subsequently outperforms DDTP \citep{meulemans2020theoretical} on all training tasks and approaches the BP baseline performance. We also consistently showed that the more the GMP is satisfied throughout learning, the better the resulting performance. The best CIFAR-10 performance obtained by our DTP is $\approx 13 \%$ higher than the existing DTP performances reported in the literature \cite{bartunov2018assessing, meulemans2020theoretical} and to our knowledge this is the first report of a DTP performance closely matching that of BP on such a complex task as ImageNet 32$\times$32.

{\bf Limitations and Future Work.} Our prescription to run several feedback weight updates per training batch entails longer simulation times but may be biologically plausible since local recurrent paths will have shorter axons that should have much shorter delays than long-range paths with complex, long axons \cite{debanne2004information, debanne2011axon}.
Future work could be done to leverage the parallelism allowed by our layer-wise feedback weight training strategy to accelerate training and subsequently scale up our DTP implementation to ImageNet on deeper architectures.

Our code is available at {\small\url{https://github.com/BPTargetDTP/ScalableDTP}}.

\paragraph{Acknowledgements.} BR was supported by NSERC (Discovery Grant: RGPIN-2020-05105; Discovery Accelerator Supplement: RGPAS-2020-00031) and CIFAR (Canada CIFAR AI Chair and Learning in Machines and Brains Program). EB and AM are supported by NSERC Discovery Grant RGPIN-2021-04104. We acknowledge resources provided by Compute Canada. YB was funded by NSERC and CIFAR.

\bibliography{example_paper}
\bibliographystyle{icml2022}

\newpage
\appendix
\onecolumn

\section{Theoretical results}
\subsection{Feedback weights training}
We re-state our main theorem for feedback weights training (Theorem~(\ref{thm:feedback-loss}) in the main).

\begin{theorem}
\label{thm:feedback-loss-appendix}
Let: 
\begin{align}
    \widehat{\mathcal{L}}_\omega^n &= -\frac{1}{\sigma^2}\epsilon^\top \cdot \left( r^n_\epsilon - s^n\right)+ \frac{1}{2 \sigma^2}\left\| r_\eta^n - s^n\right\|^2, \\
    \mathcal{L}_\omega^n &= \frac{1}{2}\left \|\partial_{s^n}F^{n^\top} - \partial_{s^{n+1}}G^{n+1} \right\|^2_F.
\end{align}

Then:

\begin{align}
\lim_{\sigma \to 0}\mathbb{E}_{\epsilon, \eta}\left[ \hat{\mathcal{L}}_\omega^n\right] &= -\left \langle \partial_{s^n} F^{n^\top}, \partial_{s^{n+1}}G^n\right\rangle_F + \frac{1}{2} \left\| \partial_{s^{n+1}}G^n \right \|_F^2
\label{eq:thm-fb-res-1}\\
\frac{\partial}{\partial \omega^n}\lim_{\sigma \to 0}\mathbb{E}_{\epsilon, \eta}\left[ \hat{\mathcal{L}}_\omega^n\right] &= \frac{\partial \mathcal{L}_\omega^n}{\partial \omega^n},
\label{eq:thm-fb-res-2}
\end{align}
\end{theorem}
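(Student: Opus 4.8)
The plan is to establish both identities by a second-order Taylor expansion in the noise amplitude $\sigma$, exploiting the fact that the \emph{difference} structure built into Alg.~\ref{alg:L-DRL} cancels the zeroth-order reconstruction term, so that $r^n_\epsilon-s^n$ and $r^n_\eta-s^n$ are genuine first-order increments. Write $J_F \triangleq \partial_{s^n}F^n(s^n)$ and $J_G \triangleq \partial_{s^{n+1}}G^n(s^{n+1})$ for the Jacobians evaluated at the clean feedforward activations, and assume $F^n$ and $G^n$ are $C^3$ with bounded derivatives on a fixed neighborhood of these base points — this regularity is what legitimizes the expansions below and the interchanges of $\lim_{\sigma\to0}$ with $\mathbb E$ and with $\partial_{\omega^n}$. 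From the updates of Alg.~\ref{alg:L-DRL} we read off the exact identities $r^n_\epsilon - s^n = (G^n\!\circ\! F^n)(s^n+\epsilon) - (G^n\!\circ\! F^n)(s^n)$ and $r^n_\eta - s^n = G^n(s^{n+1}+\eta) - G^n(s^{n+1})$.

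First I would treat the $\epsilon$ term. By the chain rule the Jacobian of $G^n\!\circ\! F^n$ at $s^n$ is $J_G J_F$, so Taylor's theorem gives $r^n_\epsilon - s^n = J_G J_F\,\epsilon + Q(\epsilon,\epsilon) + O(\|\epsilon\|^3)$ with $Q$ a fixed symmetric bilinear form (the Hessian term). Contracting with $\epsilon$ and taking the expectation over $\epsilon\sim\mathcal N(0,\sigma^2 I)$: the linear part contributes $\mathbb E[\epsilon^\top J_G J_F\,\epsilon]=\sigma^2\,\mathrm{Tr}(J_G J_F)$ by isotropy, the quadratic part contributes a cubic in $\epsilon$ of vanishing mean (odd Gaussian moments), and the tail contributes $O(\sigma^4)$; hence $\mathbb E_\epsilon[\epsilon^\top(r^n_\epsilon-s^n)] = \sigma^2\,\mathrm{Tr}(J_G J_F)+O(\sigma^4)$, so $-\tfrac1{\sigma^2}\mathbb E_\epsilon[\epsilon^\top(r^n_\epsilon-s^n)] \to -\mathrm{Tr}(J_G J_F)$ as $\sigma\to0$. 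Rewriting the trace in Frobenius form, $\mathrm{Tr}(J_G J_F) = \mathrm{Tr}\big((J_G J_F)^\top\big) = \mathrm{Tr}(J_F^\top J_G^\top) = \langle J_F^\top, J_G\rangle_F = \langle \partial_{s^n}F^{n^\top},\partial_{s^{n+1}}G^n\rangle_F$. The $\eta$ term is the same computation one level up: $r^n_\eta-s^n = J_G\eta + O(\|\eta\|^2)$, so $\|r^n_\eta-s^n\|^2 = \eta^\top J_G^\top J_G\,\eta + (\text{cubic, zero mean}) + O(\|\eta\|^4)$, giving $\mathbb E_\eta[\|r^n_\eta-s^n\|^2] = \sigma^2\|J_G\|_F^2 + O(\sigma^4)$ and $\tfrac1{2\sigma^2}\mathbb E_\eta[\|r^n_\eta-s^n\|^2]\to\tfrac12\|\partial_{s^{n+1}}G^n\|_F^2$. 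Summing the two limits gives Eq.~(\ref{eq:thm-fb-res-1}).

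For Eq.~(\ref{eq:thm-fb-res-2}) I would expand the Frobenius square in the definition of $\mathcal L^n_\omega$ (reading $G^n$ for the $G^{n+1}$ there, consistently with \cref{def:JMC}): $\mathcal L^n_\omega = \tfrac12\|\partial_{s^n}F^{n^\top}\|_F^2 - \langle \partial_{s^n}F^{n^\top},\partial_{s^{n+1}}G^n\rangle_F + \tfrac12\|\partial_{s^{n+1}}G^n\|_F^2$. Since $J_F$ does not depend on the feedback weights, $\partial_{\omega^n}$ annihilates the first summand and leaves exactly $\partial_{\omega^n}$ applied to the right-hand side of Eq.~(\ref{eq:thm-fb-res-1}); by that identity this equals $\partial_{\omega^n}\lim_{\sigma\to0}\mathbb E_{\epsilon,\eta}[\widehat{\mathcal L}^n_\omega]$, which is the claim.

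I expect the main obstacle to be the limiting arguments rather than the algebra. The subtle point is that dividing by $\sigma^2$ amplifies the Taylor remainders, so one must genuinely invoke the vanishing of the Gaussian third moments: without the exact cancellation of the cubic contraction term the error would be $O(1)$ instead of $O(\sigma^2)$ and the limit would fail to exist. Turning this into a rigorous statement — uniform control of the remainders on a neighborhood, dominated convergence to pass $\lim_{\sigma\to0}$ through $\mathbb E$, and the analogous justification for swapping $\lim_{\sigma\to0}$ with $\partial_{\omega^n}$ in the last step — is precisely where the $C^3$/bounded-derivative hypothesis is used; the rest is bookkeeping with the Frobenius inner product.
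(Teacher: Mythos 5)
Your proposal is correct and follows essentially the same route as the paper's proof: a first-order Taylor expansion of the reconstruction differences, isotropic Gaussian second moments turning $\mathbb{E}[\epsilon\epsilon^\top]=\sigma^2 I$ into the trace/Frobenius expressions, and the observation that $\mathcal{L}^n_\omega$ differs from the limit only by the $\omega^n$-independent term $\tfrac12\|\partial_{s^n}F^n\|_F^2$ --- you are merely more explicit about remainder control than the paper, which simply writes $\mathbb{E}[o(\|\epsilon\|^2)]=o(\sigma^2)$. One side remark is slightly off: the cubic contraction $\epsilon^\top Q(\epsilon,\epsilon)$ is already $O(\sigma^3)$ in expectation by a crude absolute bound, so the limit would exist even without the exact odd-moment cancellation, which only improves the post-division error from $O(\sigma)$ to $O(\sigma^2)$.
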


\begin{proof}

We have:

\begin{align}
 \epsilon^\top \cdot \left( r^n_\epsilon - s^n\right) &=
 \epsilon^\top \cdot \left(G^n \circ F^n (s^n + \epsilon)  - G^n \circ F^n (s^n)  \right) \nonumber \\
 &= \epsilon^\top \cdot \left( \partial_{s^{n+1}}G^n \cdot \partial_{s^n}F^n \cdot \epsilon + o(\|\epsilon\|)   \right) \nonumber\\
&= {\rm Tr}(\epsilon\cdot \epsilon^\top \cdot \partial_{s^{n+1}}G^n \cdot \partial_{s^n}F^n) + o(\|\epsilon\|^2).
\label{demo:feedback-step1}
\end{align} \\

Likewise:

\begin{equation}
\left\| r_\eta^n - s^n\right\|^2 = {\rm Tr}\left(\eta\cdot \eta^\top \cdot \partial_{s^{n+1}}G^n \cdot \partial_{s^{n+1}}G^{n^\top} \right) + o(\| \eta\|^2)\label{demo:feedback-step2}.
\end{equation}

Moreover, since $\epsilon \sim \mathcal{N}\left(0, \sigma^2\right)$ and $\eta \sim \mathcal{N}\left(0, \sigma^2\right)$, $\mathbb{E}_\epsilon \left [ o(\|\epsilon\|^2) \right] = \mathbb{E}_\eta \left [ o(\|\eta\|^2) \right]= o(\sigma^2)$. Altogether with Eq.~(\ref{demo:feedback-step1}) and Eq.~(\ref{demo:feedback-step2}), we finally obtain:

\begin{align}
\mathbb{E}_{\epsilon, \eta}\left[ \hat{\mathcal{L}}_\omega^n\right]
&= - \frac{1}{\sigma^2}{\rm Tr}(\underbrace{\mathbb{E}_\epsilon\left[\epsilon\cdot \epsilon^\top\right]}_{=\sigma^2 \times \mathbf{1}} \cdot \partial_{s^{n+1}}G^n \cdot \partial_{s^n}F^n) + \frac{1}{2 \sigma^2}
+ {\rm Tr}\left(\underbrace{\mathbb{E}_{\eta}\left[\eta\cdot \eta^\top\right]}_{=\sigma^2 \times \mathbf{1}} \cdot \partial_{s^{n+1}}G^n \cdot \partial_{s^{n+1}}G^{n^\top} \right) + o(1) \nonumber \\
&= -\left \langle \partial_{s^n} F^{n^\top}, \partial_{s^{n+1}}G^n\right\rangle_F + \frac{1}{2} \left\| \partial_{s^{n+1}}G^n \right \|_F^2 + o(1).
\end{align}

Therefore, sending $\sigma \to 0$ yields the desired result Eq.~(\ref{eq:thm-fb-res-1}). Finally, noticing that:

\begin{equation}
    \lim_{\sigma \to 0}\mathbb{E}_{\epsilon, \eta}\left[ \hat{\mathcal{L}}_\omega^n\right] = \mathcal{L}_\omega^n - \frac{1}{2}\left\|\partial_{s^n}F^n\right\|_F^2,
    \label{demo:feedback-step3}
\end{equation}

with the second term of Eq.~(\ref{demo:feedback-step3}) not depending on the feedback weights $\omega^n$, Eq.~(\ref{eq:thm-fb-res-2}) is straightforward. 
\end{proof}

\subsection{Feedforward weights training}
We re-state here our main theorem for feedforward weights training (Theorem~\ref{thm:feedforward-loss}) in the main). 

\begin{theorem}[Gradient Matching Property]
\label{thm:feedforward-loss-2}
Let a feedforward architecture $\mathcal{F}$ defined per \cref{def:archi} which satisfies the JMC. Then the following holds:

\begin{equation}
    \forall n \in [1, N], \qquad \frac{\partial \mathcal{L}_{\rm pred}}{\partial \theta^n} = \lim_{\beta \to 0} \frac{1}{2\beta}\frac{\partial }{\partial \theta^n}\|t^n_\beta - s^n\|^2,
    \label{eq:thm-dtp-1-bis}
\end{equation}

where the targets $(t^n_\beta )_{n\geq 1}$ obey the following recursive equations, $\forall n = N \dots 2$:

\begin{align}
\left\{
\begin{array}{l}
t^N_\beta = s^n -\beta\frac{\partial \mathcal{L}_{\rm pred}}{\partial s} \\
t^{n}_\beta = s^{n} + G^n(t_\beta^{n + 1};\omega^n) - G^n(s^{n + 1};\omega^n)
\label{demo:feedforward-loss-step1}
\end{array}
\right.
\end{align}
\end{theorem}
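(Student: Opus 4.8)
The plan is to show that, under the JMC, the DTP feedforward weight update for layer $n$ reduces exactly to the BP gradient. The natural strategy is to compute the right-hand side of Eq.~(\ref{eq:thm-dtp-1-bis}) directly. Treating $t^n_\beta$ as a constant and blocking the gradient at $s^{n-1}$ (as prescribed by the feedforward loss of Eq.~(\ref{eq:def-ff-loss})), we get $\lim_{\beta \to 0} \frac{1}{2\beta}\frac{\partial}{\partial \theta^n}\|t^n_\beta - s^n\|^2 = -\lim_{\beta\to 0}\frac{1}{\beta}\left(\partial_{\theta^n} F^{n}(s^{n-1})\right)^\top \cdot (t^n_\beta - s^n) = -\left(\partial_{\theta^n}s^n\right)^\top\cdot \delta^n_{\rm DTP}$, where $\delta^n_{\rm DTP} = \lim_{\beta\to 0}(t^n_\beta - s^n)/\beta$ is the DTP activation update defined in Eq.~(\ref{eq:def-delta-DTP}).

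The second step is to identify $\delta^n_{\rm DTP}$ under the JMC. From Eq.~(\ref{eq:def-delta-DTP}), $\delta^n_{\rm DTP} = -\left[\prod_{k=n}^{N-1}\partial_{s^{k+1}}G^k\right]\cdot \frac{\partial \mathcal{L}_{\rm pred}}{\partial s^N}$. Applying the JMC, Eq.~(\ref{eq:jmc}), termwise — i.e. substituting $\partial_{s^{k+1}}G^k = \left(\partial_{s^k}F^k\right)^\top$ for every $k$ — turns the product into $\prod_{k=n}^{N-1}\left(\partial_{s^k}F^k\right)^\top = \left(\prod_{k=N-1}^{n}\partial_{s^k}F^k\right)^\top = \left(\partial_{s^n}\overline{F}^n\right)^\top$ by the chain rule (being careful with the order reversal under transposition). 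Hence $\delta^n_{\rm DTP} = -\left(\partial_{s^n}\overline{F}^n\right)^\top\cdot \frac{\partial \mathcal{L}_{\rm pred}}{\partial s^N} = \frac{\partial \mathcal{L}_{\rm pred}}{\partial s^n} = \delta^n_{\rm BP}$, again invoking the chain rule for the composed map $s^n \mapsto s^N \mapsto \mathcal{L}_{\rm pred}$.

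The third step is to recover the gradient itself: combining the two previous steps, the DTP update equals $-\left(\partial_{\theta^n}s^n\right)^\top\cdot \frac{\partial \mathcal{L}_{\rm pred}}{\partial s^n}$, but wait — the sign: since $\delta^n_{\rm DTP} = \frac{\partial \mathcal{L}_{\rm pred}}{\partial s^n}$ carries its own sign from Eq.~(\ref{eq:first-target}), one checks that $\lim_{\beta\to0}\frac{1}{2\beta}\frac{\partial}{\partial\theta^n}\|t^n_\beta-s^n\|^2 = \left(\partial_{\theta^n}s^n\right)^\top\cdot\delta^n_{\rm DTP} = \left(\partial_{\theta^n}s^n\right)^\top\cdot\frac{\partial\mathcal{L}_{\rm pred}}{\partial s^n} = \frac{\partial\mathcal{L}_{\rm pred}}{\partial\theta^n}$ by one more application of the chain rule, $s^n$ being the only place $\theta^n$ enters $\mathcal{L}_{\rm pred}$. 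This is Eq.~(\ref{eq:thm-dtp-1-bis}). For rigor, I would also note that the $\beta\to0$ limit is legitimate because the DTP targets are defined by the smooth recursion of Eq.~(\ref{demo:feedforward-loss-step1}) with $t^N_\beta$ affine in $\beta$, so $(t^n_\beta - s^n)/\beta$ has a well-defined limit given by the telescoping product of Jacobians, which is precisely the content already packaged in Eq.~(\ref{eq:def-delta-DTP}).

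The main obstacle is bookkeeping rather than conceptual: getting the transpose/order-reversal of the Jacobian product exactly right when passing from $\prod_{k=n}^{N-1}\partial_{s^{k+1}}G^k$ to $\left(\partial_{s^n}\overline{F}^n\right)^\top$, and tracking the signs through the three nested chain-rule applications ($\theta^n\to s^n$, $s^n\to s^N$, $s^N\to\mathcal{L}_{\rm pred}$). Everything else is a direct substitution of the JMC into the already-established formula Eq.~(\ref{eq:def-delta-DTP}) plus the standard identity that the transpose of the input-output Jacobian of a deep network, contracted against the output-loss gradient, equals the layer-activation gradient — i.e. the statement that BP is reverse-mode differentiation.
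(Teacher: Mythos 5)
Your proof is correct and follows essentially the same route as the paper's: expand $(t^n_\beta - s^n)/\beta$ via the target recursion (the content already packaged in Eq.~(\ref{eq:def-delta-DTP})), substitute the JMC termwise to turn the product of feedback Jacobians into $\left(\partial_{s^n}\overline{F}^n\right)^\top$, and contract with the parameter Jacobian to recover the BP gradient. The only blemish is the sign of $\delta^n_{\rm DTP}$ in your second step — it equals $-\partial \mathcal{L}_{\rm pred}/\partial s^n$, which then cancels the minus sign from your first step — but you flag and resolve this yourself, so the conclusion stands.
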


\begin{proof}
First, note we have:

\begin{align}
\left\{
\begin{array}{l}
t^N_\beta - s^n =  -\beta\frac{\partial \mathcal{L}_{\rm pred}}{\partial s}, \\
t^{n}_\beta - s^{n} = \partial_{s^{n+1}}G^n \cdot \left(t^{n+1}_\beta - s^{n+1}_\beta \right) + o\left(\left\|t^{n+1}_\beta - s^{n+1}_\beta \right\|\right).
\label{demo:feedforward-loss-step2}
\end{array}
\right.
\end{align}

Since $t^N_\beta - s^N = o(\beta)$, by immediate induction $t^n_\beta - s^n = o(\beta) \quad \forall n=N-1 \cdots 1$. 
Denoting $\hat{\delta}^n_{\rm DTP}(\beta) \triangleq \frac{t^n - s^n}{\beta}$, we therefore obtain:

\begin{align}
\left\{
\begin{array}{l}
\hat{\delta}^N_{\rm DTP}(\beta) =  -\frac{\partial \mathcal{L}_{\rm pred}}{\partial s} +o(1), \\
\hat{\delta}^{n}_{\rm DTP}(\beta) = \partial_{s^{n+1}}G^n \cdot \hat{\delta}^{n+1}_{\rm DTP}(\beta)+ o\left(1\right).
\label{demo:feedforward-loss-step3}
\end{array}
\right.
\end{align}

Furthermore note that, treating $t^n_\beta$ as a constant, we also have:

\begin{align}
\frac{1}{2\beta}\frac{\partial }{\partial \theta^n}\|t^n_\beta - s^n\|^2 &= -\frac{1}{\beta}\partial_{\theta^n}F^{n^\top}\cdot (t^n_\beta - s^n) \nonumber \\
& = -\partial_{\theta^n}F^{n^\top}\cdot \hat{\delta}^n_{\rm DTP}(\beta) \label{demo:feedforward-loss-step4}.
\end{align}

Finally, sending $\beta \to 0$, defining:

\begin{align}
   \delta^n_{\rm DTP} &\triangleq \lim_{\beta \to 0} \hat{\delta}^n(\beta) \\
   \Delta^n_{\theta, \rm DTP} &\triangleq \lim_{\beta \to 0}\frac{1}{2\beta}\frac{\partial }{\partial \theta^n}\|t^n_\beta - s^n\|^2
\end{align}

along with the JMC property $\partial_{s^{n+1}}G^n = \partial_{s^n} F^{n^\top}$, we obtain:

\begin{align}
\left\{
\begin{array}{l}
\delta^N_{\rm DTP} =  -\frac{\partial \mathcal{L}_{\rm pred}}{\partial s}, \\
\delta^{n}_{\rm DTP} = \partial_{s^{n+1}} F^{n^\top} \cdot \hat{\delta}^{n+1}_{\rm DTP} \\
\Delta_{\theta, \rm DTP}^n = -\partial_{\theta^n}F^{n^\top}\cdot \hat{\delta}^n_{\rm DTP}
\label{demo:feedforward-loss-step5}
\end{array}
\right.
\end{align}

Note that Eq.~(\ref{demo:feedforward-loss-step5}) is equivalent to computing $\frac{\partial \mathcal{L}_{\rm pred}}{\partial \theta^n}$ by backprop, yielding the desired result Eq.~(\ref{eq:thm-dtp-1-bis}).
\end{proof}

\section{A concrete example with explicit equations}
We detail for completeness and clarity all the equations for the neural dynamics and the learning rules of the forward and of the backward weights for a LeNet-like architecture with two Conv layers and one fully connected layer for the sake of simplicity.

\paragraph{Forward operations.}
\begin{align*}
s^1 &= F^0(x;\theta^0) = \sigma(\theta^0 \star x),\\
s^2 &= F^1(s^1;\theta^1) = \sigma(\theta^1 \star s_1), \\
s^3 &= F^2(s_2;\theta^2) = \theta^2 \cdot s^2,\\
\hat{y} &= {\rm softmax}(s_3),
\end{align*}

where we implicitly assume the flattening operation between $s^2$ and $s^3$ for notational convenience. 

\paragraph{Backward operations.}
We assume here the following feedback operators $G^1$ and $G^2$ associated with $F^1$ and $F^2$ respectively: 

\begin{align*}
    G^2(s^3;\omega^2) &= \omega^2 \cdot s^3 \\
    G^1(s^2;\omega^1) &= \omega^1\star\sigma(s^2).
\end{align*}

Again, note that there is no $G^0$ feedback operator paired to $F^0$ since we do not need to propagate error signals down to the input layer.

\paragraph{Feedback weights training.} Given input noises $\epsilon^2$ and $\eta^3$ in layers $s^2$ and $s^3$ respectively, we update $\omega^2$ so as to minimize the loss $\mathcal{L}_{\omega^2}$. More precisely, $\epsilon^2$, $\eta^3$ and $\mathcal{L}_{\omega^2}$ are defined as:

\begin{align*}
    \epsilon^2 &\sim \mathcal{N}(0, \sigma^2), \\
    \eta^3 &\sim \mathcal{N}(0, \sigma^2), \\
    \mathcal{L}_{\omega^2} &= - \left(\epsilon^{2}\right)^\top \cdot (G^2(F^2 (s^2 + \epsilon^2)) - G^2(s^3)) + \frac{1}{2}\left\|G^2(s^3 + \eta^3) - G^2(s^3)\right\|^2,
\end{align*}

which results in the weight update for $\omega^2$:

\begin{equation}
\Delta \omega^2 = \epsilon^2 \cdot(\theta^2 \cdot \Delta x^2)^\top - (\omega^2 \cdot \eta^3) \cdot \Delta y^{3^\top}.
\label{eq:lr-omega3}
\end{equation}

Similarly, we train the feedback convolutional filters $\omega^1$ by injecting the input noise $\epsilon^1$ and $\eta^2$ in $s^1$ and $s^2$ respectively and minimizing the loss $\mathcal{L}_{\omega^1}$ defined as:  

\begin{align*}
    \epsilon^1 &\sim \mathcal{N}(0, \sigma^2), \\
    \eta^2 &\sim \mathcal{N}(0, \sigma^2), \\
    \mathcal{L}_{\omega^1} &= - \left(\epsilon^{1}\right)^\top \cdot (G^1(F^1 (s^1 + \epsilon^1)) - G^1(s^2)) + \frac{1}{2}\left\|G^1(s^2 + \eta^2) - G^1(s^2)\right\|^2,
\end{align*}

which results in the weight update for the convolutional feedback filters $\omega^1$:         

\begin{equation}
\Delta \omega^1 = \epsilon^1 \star (\sigma(F^1(s^1 + \epsilon^1)) - \sigma(s^2)) - (G^1(s^2 + \eta^2) - G^1(s^2))\star(\sigma(s^2 + \eta^2) - \sigma(s^2)).
\label{eq:lr-omega2}
\end{equation}

\paragraph{Forward weights training.}

We compute the first target $t^3_\beta$ and associated weight update $\Delta \theta^2$ as:

\begin{align}
    t^3_\beta &= s^3 + \beta(\hat{y} - y), \\
    \Delta \theta^2 &= \frac{1}{\beta}(t^3_\beta - s^3)\cdot s^{2^\top}.
\end{align}

Then, the target $t^3_\beta$ passes through $G^2$, yielding the target $t^2_\beta$ and associated weight update $\Delta \theta^1$: 

\begin{align}
    t^2_\beta &= s^2 + G^2(t^3;\omega^2) - G^2(s^3;\omega^2),\\
    \Delta \theta^1 &= \frac{1}{\beta}((t^2_\beta - s^2)\odot \sigma'(s^2))\star s^{1}.
    \label{eq:learning-rule-theta2}
\end{align}

Similarly, we compute $t^1_\beta$ and $\Delta \theta^0$ as:

\begin{align}
    t^1_\beta &= s^1 + G^1(t^2;\omega_2) - G^1(s^2;\omega_2)\\
    \Delta \theta^0 &= \frac{1}{\beta}((t^1_\beta - s^1)\odot\sigma'(s^1))\star x
    \label{eq:learning-rule-theta1}    
\end{align}

\section{Architecture Details}
Table ~\ref{table:architectures} we give the details of the two representative architectures studied in our work.
\begin{table}[h!]
\centering
\begin{center}
\begin{tabular}{ c c c c }
 LeNet & VGGNet\\ \hline
 Conv 5x5x32 (stride=1, pad=2) & Conv 3x3x128 (stride=1, pad=1)\\  
 Maxpool 3x3 (stride=2, pad=1) & Maxpool 2x2 (stride=2, pad=0)\\
 Conv 5x5x64 (stride=1, pad=2) & Conv 3x3x128 (stride=1, pad=1)\\
 Maxpool 3x3 (stride=2, pad=1) & Maxpool 2x2 (stride=2, pad=0)\\
 FC 512 & Conv 3x3x256 (stride=1, pad=1)\\
 FC+Softmax 10 & Maxpool 2x2 (stride=2, pad=0)\\
 - & Conv 3x3x256 (stride=1, pad=1)\\
 - & Maxpool 2x2 (stride=2, pad=0)\\
 - & Conv 3x3x512 (stride=1, pad=1)\\
 - & Maxpool 2x2 (stride=2, pad=0)\\
 - & FC+Softmax 10 \\
 \bottomrule
\end{tabular}
\end{center}
   \caption{Architectures described by layer}
  \label{table:architectures}
\end{table}

\section{Hyperparameters} 
In Tables~\ref{table:bphparamslenet},\ref{table:dtphparamslenet},\ref{table:sddtphparamslenet},\ref{table:pddtphparamslenet} we report the hyperparameters for each method and dataset studied. In both CIFAR-10 and Imagenet 32$\times$32 experiments we use the same data augmentation consisting of random horizontal flipping with 0.5 probability and random cropping with padding 4.

\begin{table}[h!]
\centering
  \begin{tabular}{p{4cm} p{2.5cm} p{2.5cm} p{2.5cm}}
    \toprule
    \multirow{2}{*}{Hyperparameter} & \multicolumn{3}{c}{Dataset} \\
      \multicolumn{1}{c}{ } &
      \multicolumn{1}{c}{ } & 
      \multicolumn{1}{c}{ } \\
      & {MNIST} & {F-MNIST} & {CIFAR-10} \\
      \midrule
 channels & [32, 64] & [32, 64] & [32, 64] \\ \hline
 activation & ELU & ELU & ELU \\ \hline
 $\text{lr}_{\text{f}}$ & 0.007938 & 0.01374 & 0.03 \\ \hline
 forward optimizer & SGD & SGD & SGD\\ \hline
 forward momentum & 0.9 & 0.9 & 0.9\\ \hline
 $\text{wd}_{\text{f}}$ & 0.0001 & 0.0001 & 0.0001\\ \hline
 scheduler & cosine & cosine & cosine\\ \hline
 scheduler eta min & 0.00001 & 0.00001 & 0.00001\\ \hline
 scheduler Tmax & 85 & 85& 85 \\ \hline
 scheduler interval/frequency & epoch/1 & epoch/1 & epoch/1\\ \hline
 initialization & kaiming uniform & kaiming uniform & kaiming uniform \\ \hline
 batch size & 166 & 140 & 193 \\ \hline
 epochs & 40 & 40 & 90 \\ 
    \bottomrule
  \end{tabular}
  \caption{Tuned BP LeNet hyperparameters}
  \label{table:bphparamslenet}
\end{table}

\begin{table}[h!]
\centering
  \begin{tabular}{p{4cm} p{4cm} p{4cm} p{4cm}}
    \toprule
    \multirow{2}{*}{Hyperparameter} & \multicolumn{3}{c}{Dataset} \\
      \multicolumn{1}{c}{ } &
      \multicolumn{1}{c}{ } & 
      \multicolumn{1}{c}{ } \\
      & {MNIST} & {F-MNIST} & {CIFAR-10} \\
      \midrule
 channels & [32, 64] & [32, 64] & [32, 64] \\ \hline
 $\beta$ & 0.4768550374762699  &  0.3651375179883248 & 0.46550286113514694\\ 
 \hline
 $\sigma$ & [0.4, 0.4, 0.2] & [0.3885862406080412, 0.2373096461112338, 0.15496346129996677] & [0.41640228838517584, 0.2826261146623929, 0.19953820693586016]\\
 \hline
 activation & ELU & ELU & ELU\\ \hline
 $\text{lr}_{\text{f}}$ & 0.02046745493369468 & 0.005697551532646145  & 0.0.01\\ \hline
 forward optimizer & SGD & SGD & SGD\\ \hline
 forward momentum & 0.9 & 0.9 & 0.9\\ \hline
 $\text{wd}_{\text{f}}$ & 0.0001 & 0.0001 & 0.0001\\ \hline
 $\text{lr}_{\text{fb}}$ & [0.06813589667087301, 0.006643595431387696, 0.018743666114857397] & [0.01099976940762419, 0.00026356477629680596, 0.06692513019217786] & [0.001,0.005,0.045]\\ \hline
 feedback training iterations & [18, 23, 12]  & [41, 15, 19] & [41, 51, 24]\\ \hline
 backward optimizer & SGD & SGD & SGD\\ \hline
 backward momentum & 0.9 & 0.9 & 0.9\\ \hline
 $\text{wd}_{\text{fb}}$ & None & None & None\\ \hline
 scheduler & cosine & cosine & cosine\\ \hline
 scheduler eta min & 0.00001 & 0.00001 & 0.00001\\ \hline
 scheduler Tmax & 85 & 85 & 85\\ \hline
 scheduler interval/frequency & epoch/1 & epoch/1 & epoch/1\\ \hline
 initialization & kaiming uniform & kaiming uniform & kaiming uniform \\ \hline
 batch size & 107 & 33 & 100\\ \hline
 epochs & 40 & 40 & 90\\ 
    \bottomrule
  \end{tabular}
  \caption{Tuned DTP LeNet hyperparameters}
  \label{table:dtphparamslenet}
\end{table}

\begin{table}[h!]
\centering
  \begin{tabular}{p{4cm} p{4cm} p{4cm}}
    \toprule
    \multirow{2}{*}{Hyperparameter} & \multicolumn{2}{c}{Dataset} \\
      \multicolumn{1}{c}{ } &
      \multicolumn{1}{c}{ } \\
      & {CIFAR-10} & {ImageNet 32$\times$32} \\
      \midrule
 channels & [128, 128, 256, 256, 512] & [128, 128, 256, 256, 512]\\ \hline
 $\beta$ & 0.7  &  0.7\\ 
 \hline
 $\sigma$ & [0.4, 0.4, 0.2, 0.2, 0.08] & [0.4, 0.4, 0.2, 0.2, 0.08]\\
 \hline
 activation & ELU & ELU\\ \hline
 $\text{lr}_{\text{f}}$ & 0.05 & 0.01\\ \hline
 forward optimizer & SGD & SGD\\ \hline
 forward momentum & 0.9 & 0.9\\ \hline
 $\text{wd}_{\text{f}}$ & 0.0001 & 0.0001\\ \hline
 $\text{lr}_{\text{fb}}$ & [1e-4, 3.5e-4, 8e-3, 8e-3, 0.18] & [1e-4, 3.5e-4, 8e-3, 8e-3, 0.18]\\ \hline
 feedback training iterations & [20, 30, 35, 55, 20]  & [25, 35, 40, 60, 25]\\ \hline
 backward optimizer & SGD & SGD\\ \hline
 backward momentum & 0.9 & 0.9\\ \hline
 $\text{wd}_{\text{fb}}$ & None & None\\ \hline
 scheduler & cosine & cosine\\ \hline
 scheduler eta min & 0.00001 & 0.00001\\ \hline
 scheduler Tmax & 85 & 85\\ \hline
 scheduler interval/frequency & epoch/1 & epoch/1\\ \hline
 initialization & kaiming uniform & kaiming uniform\\ \hline
 batch size & 128 & 256\\ \hline
 epochs & 90 & 90\\ 
    \bottomrule
  \end{tabular}
  \caption{Tuned DTP VGGNet hyperparameters}
  \label{table:dtphparamslenet}
\end{table}

\begin{table}
\centering
  \begin{tabular}{p{4cm} p{4cm} p{4cm} p{4cm}}
    \toprule
    \multirow{2}{*}{Hyperparameter} & \multicolumn{3}{c}{Dataset} \\
      \multicolumn{1}{c}{ } &
      \multicolumn{1}{c}{ } & 
      \multicolumn{1}{c}{ } \\
      & {MNIST} & {F-MNIST} & {CIFAR-10} \\
      \midrule
 channels & [32, 64] & [32, 64] & [32, 64] \\ \hline
 target stepsize $\eta$ & 0.04385 & 0.04385 & 0.015962099947441903 \\ \hline
 $\beta_1$ & 0.9 & 0.9 & 0.9\\ \hline
 $\beta_2$ & 0.999 & 0.999 & 0.999 \\ \hline
 $\epsilon$ & [6.533e-05, 1.175e-05, 6.843e-05, 2.564e-05] & [6.533e-05, 1.175e-05, 6.843e-05, 2.564e-05] & [2.7867895625009e-08, 1.9868935703787622e-08, 4.515242618159344e-06, 4.046144976139705e-05]\\ \hline
 $\epsilon_{\text{fb}}$ & 9.506e-08 & 9.506e-08 & 7.529093372180766e-07 \\ \hline
 $\beta_{1,\text{fb}}$ & 0.999 & 0.999 & 0.999 \\ \hline
 $\beta_{2,\text{fb}}$ & 0.999 & 0.999 & 0.999 \\ \hline
 $\sigma$ & [4.788e-05, 0.0008712, 0.0002377, 3.966e-05] & [4.788e-05, 0.0008712, 0.0002377, 3.966e-05] & [0.00921040366516759, 0.00921040366516759, 0.00921040366516759, 0.00921040366516759]\\ \hline
 activation & tanh & tanh & tanh\\ \hline
 $\text{lr}_{\text{f}}$ & [0.001694, 0.09782, 0.02479, 0.001937] & [0.001694, 0.09782, 0.02479, 0.001937] & [0.00025935571806476586, 0.000885500279951265, 0.0001423047695105589, 3.3871035558126015e-06] \\ \hline
 forward optimizer & Adam & Adam & Adam \\ \hline
$\text{wd}_{\text{f}}$ & 0 & 0 & 0 \\ \hline
$\text{lr}_{\text{fb}}$ &  0.0001614 & 0.0001614 & 0.0045157498494467095 \\ \hline
 feedback training iterations & [1, 1, 1, 1] & [1, 1, 1, 1] & [1, 1, 1, 1] \\ \hline
 feedback activation & linear & linear & linear\\ \hline
 backward optimizer & Adam & Adam & Adam\\ \hline
 $\text{wd}_{\text{fb}}$ & 3.993e-05 & 3.993e-05 & 6.169295107849636e-05 \\ \hline
 feedback pre-training epochs & 10 & 10 & 10\\ \hline
 feedback extra training epochs & 1 & 1 & 1\\ \hline
 scheduler & cosine & cosine & cosine \\ \hline
 scheduler eta min & 0.00001 & 0.00001 & 0.00001\\ \hline
 scheduler Tmax & 85 & 85 & 85\\ \hline
 scheduler interval/frequency & epoch/1 & epoch/1 & epoch/1 \\ \hline
 initialization & xavier normal & xavier normal & xavier normal \\ \hline
 batch size & 143 & 143 & 128\\ \hline
 epochs & 40 & 40 & 90 \\ 
 \bottomrule
  \end{tabular}
  \caption{Tuned s-DDTP LeNet hyperparameters}
  \label{table:sddtphparamslenet}
\end{table}

\begin{table}[h!]
\centering
  \begin{tabular}{p{4cm} p{4cm} p{4cm} p{4cm}}
    \toprule
    \multirow{2}{*}{Hyperparameter} & \multicolumn{3}{c}{Dataset} \\
      \multicolumn{1}{c}{ } &
      \multicolumn{1}{c}{ } & 
      \multicolumn{1}{c}{ } \\
      & {MNIST} & {F-MNIST} & {CIFAR-10} \\
      \midrule
 channels & [32, 64] & [32, 64] & [32, 64] \\ \hline
 target stepsize $\eta$ & 0.0428 & 0.01308 & 0.09983 \\ \hline
 $\beta_1$ & 0.9 & 0.9 & 0.9\\ \hline
 $\beta_2$ & 0.999 & 0.999 & 0.999 \\ \hline
 $\epsilon$ & [4.409e-06, 9.007e-07, 2.197e-05, 1.318e-05] & [2.376e-08, 4.795e-06, 4.672e-06, 1.663e-07] & [3.814e-05, 1.063e-07, 4.759e-07, 2.439e-06]\\ \hline
 $\epsilon_{\text{fb}}$ & 1.197e-08 & 2.491e-07 & 2.052e-07 \\ \hline
 $\beta_{1,\text{fb}}$ & 0.999 & 0.999 & 0.999 \\ \hline
 $\beta_{2,\text{fb}}$ & 0.999 & 0.999 & 0.999 \\ \hline
 $\sigma$ & [1.342e-05, 0.0002404, 2.927e-05, 0.0003338] & [0.003812, 0.00224, 0.0005647, 0.004229] & [0.000307, 3.066e-05, 7.908e-05, 0.0006653]\\ \hline
 activation & tanh & tanh & tanh\\ \hline
 $\text{lr}_{\text{f}}$ & [0.004434, 0.001448, 0.0006104, 0.001353] & [0.0003865, 0.00175, 0.001484, 0.0001489] & [0.0002289, 0.006166, 0.0001575, 5.573e-05] \\ \hline
 forward optimizer & Adam & Adam & Adam \\ \hline
$\text{wd}_{\text{f}}$ & 0 & 0 & 0 \\ \hline
$\text{lr}_{\text{fb}}$ &  0.0006438 & 0.0001795 & 0.001123 \\ \hline
 feedback training iterations & [49, 32, 54, 11] & [48, 42, 52, 22] & [24, 35, 36, 19] \\ \hline
 feedback activation & linear & linear & linear\\ \hline
 backward optimizer & Adam & Adam & Adam\\ \hline
 $\text{wd}_{\text{fb}}$ & 2.236e-05 & 0.000128 & 3.564e-06 \\ \hline
 feedback pre-training epochs & 0 & 0 & 0\\ \hline
 feedback extra training epochs & 0 & 0 & 0\\ \hline
 scheduler & cosine & cosine & cosine \\ \hline
 scheduler eta min & 0.00001 & 0.00001 & 0.00001\\ \hline
 scheduler Tmax & 85 & 85 & 85\\ \hline
 scheduler interval/frequency & epoch/1 & epoch/1 & epoch/1 \\ \hline
 initialization & xavier normal & xavier normal & xavier normal \\ \hline
 batch size & 138 & 141 & 190 \\ \hline
 epochs & 40 & 40 & 90 \\ 
 \bottomrule
  \end{tabular}
  \caption{Tuned p-DDTP LeNet hyperparameters}
  \label{table:pddtphparamslenet}
\end{table}

\end{document}